\newcommand{\hquad}{\hspace{0.5em}}
\newtheorem{thm}{Theorem}
\newtheorem{definition}{Definition}
\newtheorem{corollary}{Corollary}
\title{Eliminating Label Leakage in Tree-based Vertical Federated Learning}
\author{
  Hideaki Takahashi\thanks{The University of Tokyo (this work was done during the internship at Institute for AI Industry Research, Tsinghua University),\newline \texttt{takahashi-hideaki567@g.ecc.u-tokyo.ac.jp}} \and
  Jingjing Liu\thanks{Institute for AI Industry Research, Tsinghua University, \texttt{jjliu@air.tsinghua.edu.cn}} \and
  Yang Liu\thanks{Corresponding Author, Institute for AI Industry Research, Tsinghua University, Shanghai Artificial Intelligence Laboratory, \texttt{liuy03@air.tsinghua.edu.cn}}}
\begin{document}
\maketitle

\begin{abstract}
Vertical federated learning (VFL) enables multiple parties with disjoint features of a common user set to train a machine learning model without sharing their private data. Tree-based models have become prevalent in VFL due to their interpretability and efficiency. However, the vulnerability of tree-based VFL has not been sufficiently investigated. In this study, we first introduce a novel label inference attack, \textbf{ID2Graph}, which utilizes the sets of record-IDs assigned to each node (i.e., instance space) to deduce private training labels. ID2Graph attack generates a graph structure from training samples, extracts \textit{communities} from the graph, and clusters the local dataset using community information. To counteract label leakage from the instance space, we propose two effective defense mechanisms, \textbf{Grafting-LDP}, which improves the utility of label differential privacy with post-processing, and \textbf{ID-LMID}, which focuses on mutual information regularization. Comprehensive experiments on various datasets reveal that ID2Graph presents significant risks to tree-based models such as Random Forest and XGBoost. Further evaluations of these benchmarks demonstrate that our defense methods effectively mitigate label leakage in such instances.
\end{abstract}

\keywords{Privacy Preserving Machine Learning \and Vertical Federated Learning \and Label Leakage}

\section{Introduction}

Tree-based models, including a single decision tree and tree ensembles, such as random forests (RFs) and gradient-boosted decision trees (GBDTs), are among the most widely utilized machine learning algorithms in practice~\cite{kaggle2021}. These models work by recursively partitioning the feature space and training a decision tree to make predictions based on these partitions. When training a decision tree, each internal node splits the \textit{instance space}, the set of training sample IDs assigned to this node, according to a specific discrete function of the input attributes. Due to growing privacy concerns, tree-based vertical federated learning (T-VFL) has grown in popularity~\cite{cheng2021secureboost,wu2020privacy,yin2021comprehensivesurvey,li2022opboost}. T-VFL enables multiple parties with disjoint features of a common user set to train a global tree model collaboratively~\cite{Liu2022VFLsurvey} without exposing their original data. One example is a medical diagnosis model trained on datasets from several hospitals, in which case each hospital may possess a different set of features extracted from the same patient~\cite{liu2019communication}. Another example is financial companies who prefer to use each other's private features to create a credit scoring model~\cite{yang2019federated,zheng2020vertical}. In a typical VFL, only one party, referred to as the active party, has training labels, whereas we call all the other parties passive parties~\cite{yang2023survey}.

Since labels are considered precious and sensitive assets in many scenarios, assessing the security of existing T-VFL methods is imperative. However, most existing research on label leakage attacks against VFL focuses on neural networks~\cite{fu2022label,sun2022label,li2021label}, while tree-based models are deployed more widely than neural networks in practical applications~\cite{kaggle2021}. Although~\cite{cheng2021secureboost} makes a hypothesis that a passive party might be able to infer training labels from the publically shared instance spaces, which are sets of record-IDs assigned to each node, it has not been well studied how and to what extent the attacker can steal training labels. This concern has not been widely recognized, as many recent works~\cite{cheng2021secureboost,10.1007/11562382_75,liu2020federated,hou2021verifiable,10.1145/3448016.3457241,chen2021secureboost+,XU2023237,9514457,yao2022efficient,tian2020federboost,zhu2021pivodl,wang2022feverless} still reveal the instance space to all parties.

In this study, we propose a novel \textbf{ID2Graph} attack, which allows an honest-but-curious passive party to infer private training labels from instance space exposed with high accuracy. We execute the attack by extracting a graph structure from data records used to train the tree-based model and then applying community detection to cluster the learned graph. To eliminate such leakage risk, we propose two effective defense mechanisms: \textbf{Grafting-LDP} and \textbf{ID-LMID}. Grafting-LDP is based on label differential privacy, and ID-LMID utilizes mutual information regularization. Our contributions are two-fold: $1)$ we propose a novel label inference attack against tree-based vertical federated learning and demonstrate its effectiveness through comprehensive experiments; $2)$ we present scalable defense algorithms and showcase their superiority to existing defense mechanisms.

The rest of the paper is structured as follows. First, we overview the workload of T-VFL in Sec.~\ref{sec:tvfl}. Next, we propose a novel label leakage attack, ID2Graph, in Sec.~\ref{seq:attack} and effective defense algorithms, Grafting-LDP and ID-LMID in Sec.~\ref{sec:defense}. Then, in Sec.~\ref{seq:experiments}, we experimentally evaluate our methods on diverse datasets. Sec.~\ref{sec:rw} overviews the existing studies related to this work. Sec.~\ref{seq:conclusion} concludes with a discussion and directions for future work.

\section{Tree-based Vertical Federated Learning}
\label{sec:tvfl}

\begin{figure}[!th]
    \centering
    \includegraphics[width=\linewidth]{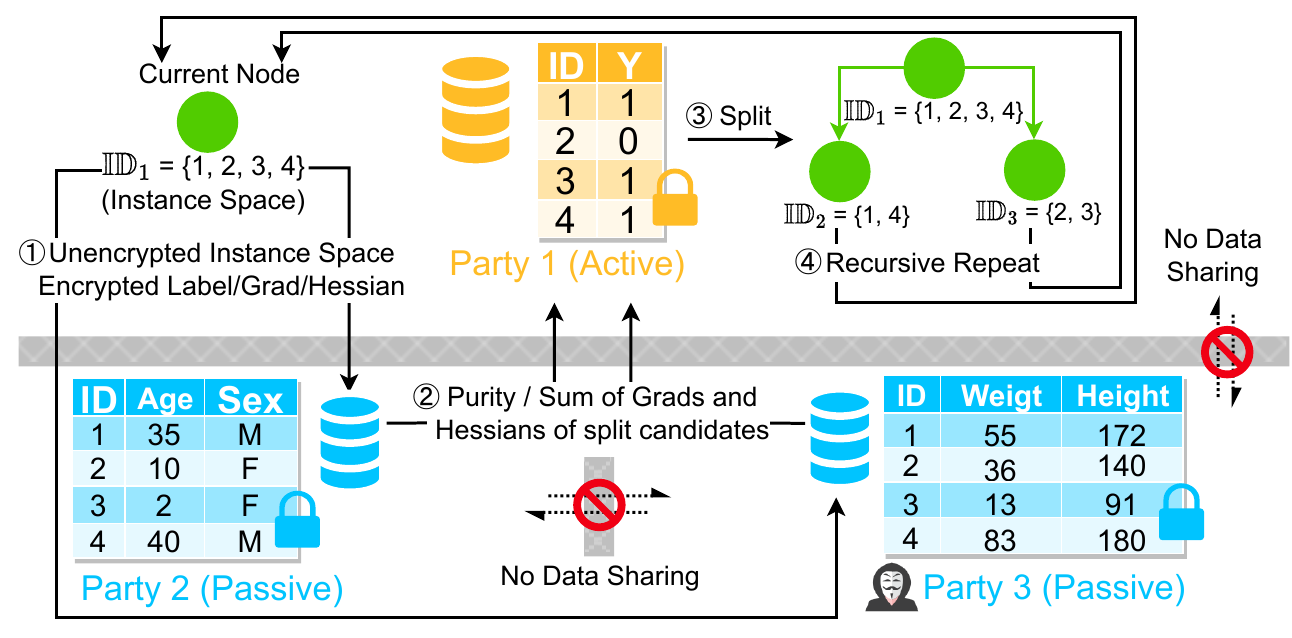}
    \caption{Overview of a tree-based VFL framework where three parties collaboratively train a tree-based model by securely evaluating each split candidate. Since the active party shares the instance space of a node for further split, one malicious passive party might be able to steal ground-truth label information from the instance space the active party shared with him.
    }
    \label{fig:problemdefinition}
\end{figure}

This section first summarizes the tree-based models and then overviews the typical workflow of T-VFL. 

Tree-based methods learn a model by recursively splitting the instance space of the node into several subspaces with certain criteria. For example, Random Forest determines the best split with purity-based gain for the classification task from the set of sub-sampled features. One of the popular metrics is Gini-gain defined as $\frac{n^{L}}{n} \sum_{c \in \mathbb{C}} (\frac{n^{L,c}}{n^{L}})^2 + \frac{n^{R}}{n} \sum_{c \in \mathbb{C}} (\frac{n^{R,c}}{n^{R}})^2 - \sum_{c \in \mathbb{C}} (\frac{n^{c}}{n})^2$, where $n$, $n^{L}$ and $n^{R}$ are the number of samples assigned to the parent, left child, and right child node, respectively. $n^{L,c}$ and $n^{R,c}$ are the numbers of data points with class $c$ within the left and right nodes. XGBoost~\cite{chen2016xgboost} chooses the best split threshold that maximizes the following gain: $\frac{1}{2}[\frac{(g^L)^{2}}{h^{L} + \lambda} + \frac{(g^{R})^{2}}{h^{R} + \lambda} - \frac{g^{2}}{h + \lambda}] - \gamma$, where $g$ and $h$ are the sums of gradients and hessians within the current node, and $g^{L}$, $g^{R}$, $h^{L}$, and $h^{R}$ are those of left and right child nodes after the split, respectively. $\lambda$ and $\gamma$ are hyper-parameters.
The instance space $\mathbb{ID}^{t}_{u}$ is defined as the assignment of data sample IDs to the $u$-th node of the $t$-th tree. A node is called a leaf when it does not have any children.

In many prior works on T-VFL~\cite{cheng2021secureboost,liu2020federated,10.1145/3448016.3457241,chen2021secureboost+,XU2023237,9514457,yao2022efficient,zhu2021pivodl,wang2022feverless}, active and passive parties train a tree-based model by securely communicating necessary statistics to find the best split as well as plain-text instance space for each split. Specifically, Secureboost~\cite{cheng2021secureboost} repeatedly communicates encrypted gradients and hessian in XGBoost, as well as instance spaces and split information of the current node, including best-split feature ID and threshold ID. Similarly, Vertically Federated Random Forest~\cite{yao2022efficient} repeatedly communicates ciphered purity score, as well as instance spaces and split information of the current node. See Fig~\ref{fig:problemdefinition} for the overview of a T-VFL. 

Algorithms \ref{alg:tvfl:passive} and \ref{alg:tvfl:active} depict the typical workflows to construct a tree in Tree-based Vertical Federated Learning (T-VFL) for the passive and active parties, respectively. Since each T-VFL protocol mentioned in Sections 1 and 6 adopts different methods, we provide only abstract overviews here. We denote $m$-th party as $\mathcal{P}_{m}$, where $\mathcal{P}_{1}$ is the active party that possesses the training labels. We assume that $\mathcal{P}_{m}$ has $|\mathbb{F}_{m}|$ features. In T-VFL, the passive party ($\mathcal{P}_m$) usually receives the instance space of a node to be divided ($\mathbb{ID}$). Then, the passive party iterates through all features and calculates the percentiles of each feature on the instances in $\mathbb{ID}$ (Line 4 in Algo.~\ref{alg:tvfl:passive}). In Algo.~\ref{alg:tvfl:passive}, we denote the value of $i$-th samples' $\phi$-th feature owned by $m$-th party as $x^{m}_{i, \phi}$. Next, the passive party generates a binary split for each feature by comparing each instance's feature value to the selected percentile (Line 4 $\sim$ 7 in Algo.~\ref{alg:tvfl:passive}). The split is evaluated using a scoring function, and the result is stored in $\Psi^{m}$. As the scoring function, Random Forest typically uses gini impurity for classification, and XGBoost adopts its own gain function. Once all splits for all features are evaluated, $\Psi^{m}$ is sent to the active party ($\mathcal{P}_1$). If $\mathcal{P}_m$ is chosen as the best party, it receives the best-split index $k^*$ from $\mathcal{P}_1$ and sends the instance spaces of the two child nodes generated by the best split to $\mathcal{P}_1$ (Line 9 $\sim$ 12 in Algo.~\ref{alg:tvfl:passive}). These procedures are recursively continued until termination conditions, such as depth constraints, are met.

\begin{algorithm}[!th]
\caption{Split Finding for T-VFL (Passive Party)}
\label{alg:tvfl:passive}
\begin{algorithmic}[1]
\State $\mathcal{P}_{m}$ receives the instance space $\mathbb{ID}$ of a node to be divided
\State $\Psi^{m} \leftarrow \{\}$
\For{$\phi = 1 \leftarrow |\mathbb{F_{m}}|$}
    \State $\{s_{\phi, 1} , s_{\phi, 2}, ..., s_{\phi, l}\}$ $\leftarrow$ percentiles on $\{x^{m}_{i, \phi}\ | i \in \mathbb{ID}\}$
    \For{$\upsilon = 1 $ to $l$}
        \State Left instance space $\leftarrow \{i | i \in \mathbb{ID}, x^{m}_{i, \phi} < s_{\phi, l}\}$
        \State Right instance space $\leftarrow \{i | i \in \mathbb{ID}, x^{m}_{i, \phi} \geq s_{\phi, l}\}$
        \State Add the evaluation of this split to $\Psi^{m}$
    \EndFor
\EndFor
\State $\mathcal{P}_{m}$ sends $\Psi^{m} = \{\sigma_{m, k}\}$, the evaluation for each split, to $\mathcal{P}_{1}$
\If{$\mathcal{P}_{m}$ is selected as the best party}
\State $\mathcal{P}_{m}$ receives $k^{*}$ from $\mathcal{P}_{1}$
\State $\mathcal{P}_{m}$ sends the childrens' instance spaces generated by $k^{*}$
\EndIf
\end{algorithmic}
\end{algorithm}

\begin{algorithm}[!th]
\caption{Split Finding for T-VFL (Active Party)}
\label{alg:tvfl:active}
\begin{algorithmic}[1]
\State $\mathcal{P}_{1}$ broadcasts the instance space $\mathbb{ID}$ of a node to be divided
\State $\mathcal{P}_{1}$ gathers the evaluation of split candidates $\Psi = \{\Psi^{m}\}^{M}_{m = 1}$.
\State $\mathcal{P}_{1}$ picks the best split $\sigma_{m^{*}, k^{*}}$ from $\Psi$, and notifies $k^{*}$ to $\mathcal{P}_{m^{*}}$
\State $\mathcal{P}_{1}$ receives the instance space of children nodes from $\mathcal{P}_{m^{*}}$
\State If terminated conditions are not satisfied, the children are recursively trained.
\end{algorithmic}
\end{algorithm}

\begin{figure}[!th]
\centering
\includegraphics[width=\textwidth]{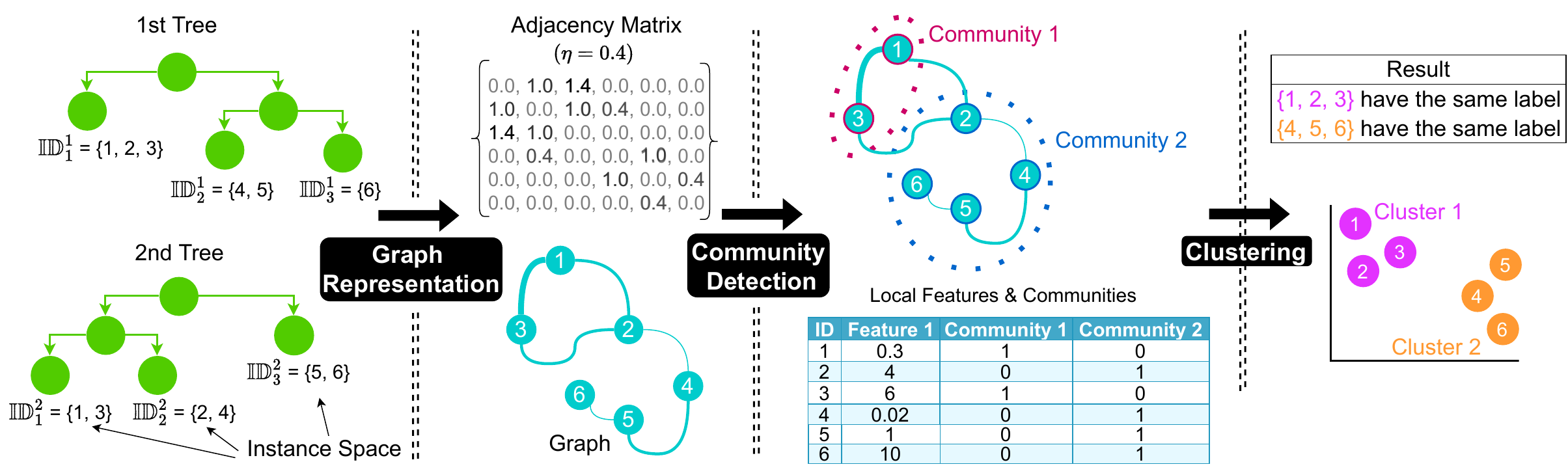}
\caption{Illustration of ID2Graph. This method consists of 1) creating a graph of data points from the trained model, 2) grouping the vertices of the graph to communities, and 3) clustering the local dataset along with extracted communities to estimate which data samples belong to the same label.}
\label{fig:concept}
\end{figure}

During the inference phase, the active party takes charge of predicting new samples. The active party guides the model's path based on decisions and the features possessed by each party until they arrive at a final prediction. The active party has complete knowledge of who has the feature for splitting each node and the weight of each node. When determining whether a given sample should move to the left or right child from the current node, the active party consults the owner of the feature to split that node. It asks this owner to assess whether the new sample is below the threshold or not. Once the new record reaches a leaf, the active party quickly obtains the predicted value for that record in the current tree.

\section{ID2Graph Attack}
\label{seq:attack}

Our proposed attack, ID2Graph, is an attack that a passive party uses to infer training labels $\bm y$ from the instance space exposed to him during T-VFL training. The threat model ID2Graph assumes is compatible with the standard T-VFL settings described in Sec.~\ref{sec:tvfl}. To extract intrinsic patterns of training samples, ID2Graph consists of three stages (See Fig.~\ref{fig:concept}): 1) \textit{Graph Representation}: converting the trained model to a graph of record ids; 2) \textit{Community Detection}: abstracting communities within the graph; 3) \textit{Clustering}: clustering the dataset with assigned communities.

\paragraph{Threat Model}

As the threat model, we assume the entire dataset has $N$ records, and each of $M$ parties has a subset of disjoint features. We denote the set of local datasets as $\{\bm X^{m} \in \mathbb{R}^{N \times |\mathbb{F}_{m}|}\}_{m=1}^{M}$, where $\mathbb{F}_{m}$ is the set of local features of the $m$-th party, and $|\mathbb{F}_{m}|$ is the number of features of the $m$-th party. $\forall m \neq m' \in \{1, ..., M\}, \hquad \mathbb{F}_{m} \cap \mathbb{F}_{m'} = \emptyset$. We focus on classification tasks with $|\mathbb{C}|$ classes, where $\mathbb{C}$ is the set of classes, and only the active party, has the ground-truth labels $\bm y \in \mathbb{R}^{N}$. Let $\mathcal{P}_{m}$ be the $m$-th party, $\mathcal{P}_{1}$ be the active party, and the others be the passive parties. All parties jointly train $|\mathbb{T}|$ trees in total ($\mathbb{T}$ denotes the set of trees).

Based on the T-VFL algorithms above, we consider the honest-but-curious threat model. Specifically, all parties adhere to the given protocol, do not possess auxiliary information, nor engage in side-channel attacks such as analyzing timing, power consumption, or network traffic. The attack controls one passive party $\mathcal{P}_{m}$ where it has access to the local data $\bm X^{m}$ and instance spaces sent to him from the active party $\mathbb{ID}_{m}=\{\mathbb{ID}^{t}_{u}\}_{(t,u) \in \mathcal{P}_{m}}$, and the number of classes $|\mathbb{C}|$. Each party also knows the threshold values only for nodes whose threshold feature is owned by that party. Given the above information, the attacker attempts to infer which training data points belong to the same class.

\paragraph{Step 1: Graph Representation}

As suggested in~\cite{cheng2021secureboost}, it is natural to assume that data instances assigned to the same leaf of the trained model share similarities. Thus, we use a graph structure to represent the relationships across training samples. Each vertex of the graph represents a corresponding data point, and two vertices are connected if they belong to the same leaf (we use \textit{vertex} for graph and \textit{node} for tree). Specifically, we convert a trained tree model to an adjacency matrix $A \in \mathbb{R}^{N \times N}$, as follows:

\begin{equation}
\label{eq:adj}
    A_{i, i'} = \sum_{t=1}^{|\mathbb{T}^{m}|} \sum_{u=1}^{U_t} \eta^{t-1} \mathbb{1}_{\mathbb{ID}^{t}_{u}}(x_{i})\mathbb{1}_{\mathbb{ID}^{t}_{u}} (x_{i'}) \mathbb{1}_{\mathbb{leaf}^{t}}(u)
\end{equation} Here, $|\mathbb{T}^{m}|$ is the number of trees available to the passive party $m$; $U_t$ is the number of nodes of the $t$-th available tree from the viewpoint of the attacker; $\mathbb{leaf}^{t}$ is the set of leaves within the $t$-th tree; and $\eta$ is the discount weight of each tree. $\mathbb{1}$ is the indicator function where $\mathbb{1}_{\mathcal{S}}(x)= 1 \ \ if x \in \mathcal{S}, else \ \ 0$. While the order of leaves does not change $A$, the ordering of trees can influence $A$ when $\eta \neq 1$. This work uses $\eta=1$ for bagging like Random Forest where each tree is independently trained, and $\eta < 1$ for boosting like XGBoost where the information about the labels gradually decreases as the training progresses~\cite{cheng2021secureboost}. Note that the leaves the attacker possesses are not the same as ground-truth leaves of the entire model, as the attacker can obtain the instance space of leaf nodes only if it possesses the attribute that splits the node. Our approach only relies on $\mathbb{T}^{m}$ and not on $\mathbb{T}$.

Since storing the adjacency matrix $A$ requires a space complexity of $O(N^2)$ in the worst case, even with the use of a sparse matrix, we adopt an approximate representation for large datasets that reduces the necessary memory to $O(B N)$, where $B$ is an arbitrary integer. To achieve this, we divide the instance space into multiple chunks of length $B$. We use the same intra-chunk edges within each chunk as in Eq.~\ref{eq:adj}. We also add a few inter-chunk edges to maintain the binding relationship across the instance space. Further details can be found in Algo.~\ref{alg-meaf}. If the size of the instance space is greater than or equal to the chunk size $B$, the algorithm adds inter-chunk edges with a weight of $w'$ between the end of one chunk and the beginning of the next. It then iterates through all pairs of the instance space within each chunk and adds the same edge with a weight of $\eta^{t-1}$ to the adjacency matrix.

\begin{algorithm}[!th]
\caption{Memory Efficient Adjacency Matrix}
\label{alg-meaf}
\begin{algorithmic}[1]

\Require The chunk size $B$, the discount factor $\eta$ and weight of inter-chunk edge $w'$
\Ensure Adjacency matrix $A \in R^{N \times N}$

\State $A \leftarrow$ Zero matrix
\For{$t \leftarrow 1, 2, ..., |\mathbb{T}^{m}|$}
\For{$u \leftarrow 1, 2, ...., U_{t}$}
    \State $\mathbb{ID}^{t}_{u} = \{{i_{1}, i_{2}, ..., i_{|\mathbb{ID}^{t}_{u}|}}\}$
    \Comment{Instance space of $u$-th node}
    \State AddEdges($u$, $t$, $\mathbb{ID}^{t}_{u}$)
\EndFor
\EndFor
\State
\Return $A$
\State
\Function{AddEdges}{$u$, $t$, $\mathbb{ID}^{t}_{u}$}
\If{$u$-th node within $t$-th tree is not a leaf}
    \Return
\EndIf
\If{$|\mathbb{ID}^{t}_{u}| < B$}
\For{$j \gets$ 1 to $|\mathbb{ID}^{t}_{u}|$}
\For{$k \gets$ $j + 1$ to $|\mathbb{ID}^{t}_{u}|$}
\State $A_{i_{j}, i_{k}} \leftarrow A_{i_{j}, i_{k}} + \eta^{t-1}$
\State $A_{i_{k}, i_{j}} \leftarrow A_{i_{k}, i_{j}} + \eta^{t-1}$
\Comment{Intra-chunk edges}
\EndFor
\EndFor
\Else
\State $s \gets 0, \quad t \gets 0$
\While{$s \leq |\mathbb{ID}^{t}_{u}|$}
\If{$s \neq 0$}
\State $A_{i_{s}, i_{t}} \leftarrow A_{i_{s}, i_{t}} + w'$
\State $ A_{i_{t}, i_{s}} \leftarrow A_{i_{t}, i_{s}} + w'$
\Comment{Inter-chunk edge}
\EndIf
\State $t \gets \min{(s + B, |\mathbb{ID}^{t}_{u}| + 1)}$
\For{$j \gets s \hbox{ to } t - 1$}
\For{$k = j + 1 \hbox{ to } t - 1$}
\State $A_{i_{j}, i_{k}} \leftarrow A_{i_{j}, i_{k}} + \eta^{t-1} $
\State $ A_{i_{k}, i_{j}} \leftarrow A_{i_{k}, i_{j}} + \eta^{t-1}$
\Comment{Intra-chunk edges}
\EndFor
\EndFor
\State $s \gets t$
\EndWhile
\EndIf
\EndFunction
\end{algorithmic}
\end{algorithm}

\paragraph{Step 2: Community Detection}

The community detection method is a popular tool to cluster vertices of a graph into multiple communities, where vertices in each community are tightly linked, and vertices in different communities are loosely connected. We adopt Louvain method~\cite{blondel2008fast}, one of the fastest community detection methods, which extracts communities by optimizing modularity $Q$ (a metric of community quality) as follows:

\begin{align}
\label{eq:Q}
\begin{split}
    Q = &(1/(2\sum_{v, v'} A_{v, v'})) \sum_{v, v'} [A_{v, v'} - \\ &(\sum_{v'}A_{v, v'} \sum_{v}A_{v, v'})/(2  \sum_{v} \sum_{v'} A_{v, v'})] \delta(\pi_{v}, \pi_{v'})
\end{split}
\end{align}, where $\pi_{v}$ is the community assigned to the $v$-th vertex, and $\delta$ is the Kronecker delta function, i.e., $\delta(\pi_{v}, \pi_{v'})$ is one when $\pi_{v} = \pi_{v'}$ and zero otherwise. Higher $Q$ indicates denser connections within a community and looser links between different communities. After initially assigning each vertex to its own community, Louvain method iteratively executes modularity optimization and community aggregation to maximize $Q$. During the modularity optimization phase, Louvain method moves each vertex to the best neighboring community, which improves $Q$ until saturation. Then, Louvain method generates a new graph whose vertices represent communities detected during the previous optimization phase.

\paragraph{Step 3: Clustering}

After partitioning the graph of $A$ to the communities, the attacker utilizes allocations of communities as features for clustering. In this study, we apply K-means~\cite{hartigan1979algorithm,ahmed2020k} on the block matrix $[\bar{\bm{X}^{m}}, \alpha \Omega]$ to group the samples to $|\mathbb{K}|$ clusters, where $\bar{\bm{X}^{m}}$ is the min-max normalized dataset of the attacker, $\Omega$ is the dummy variables of assigned communities where $\Omega_{i, j}$ is 1 if $i$-th sample belongs to $j$-th community, $\alpha$ is the weight for $\Omega$ and $\mathbb{K}$ represents the set of cluster labels (see Appendix.~D for the pseudo-code). Since we assume that the attacker knows the number of class categories, the attacker sets the number of clusters $|\mathbb{K}|$ to $|\mathbb{C}|$.


\section{Defense}
\label{sec:defense}

To effectively mitigate label leakage, we develop two innovative defense mechanisms with theoretical guarantees: Grafting-LDP and ID-LMID. Grafting-LDP is founded on differential privacy~\cite{dwork2006differential}, while ID-LMID is grounded in mutual information regularization~\cite{wang2021improving}. We also compare the characteristics of these two methods.

\subsection{Grafting-LDP}

\begin{figure}[!th]
    \includegraphics[width=\linewidth]{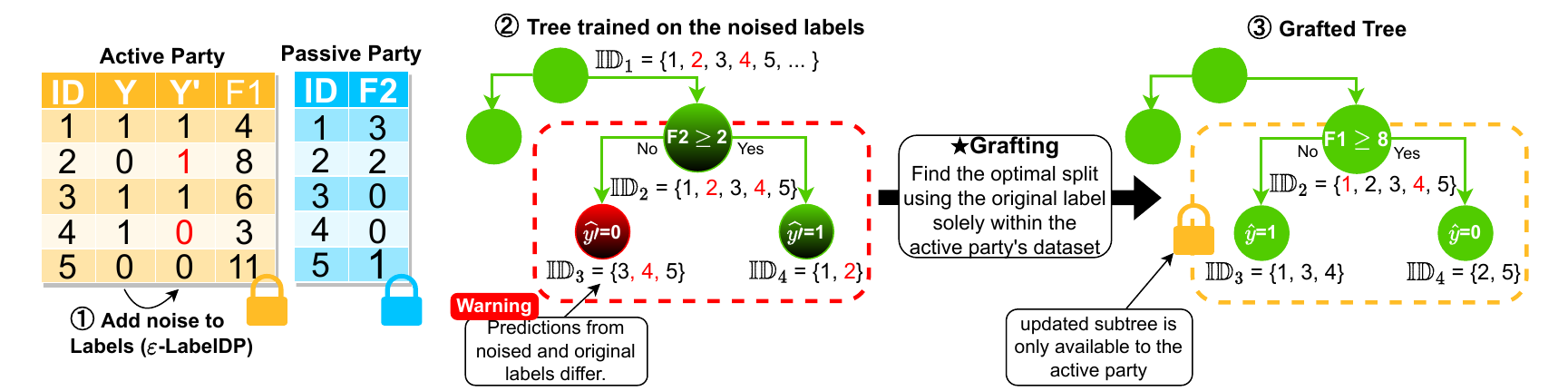}
    \caption{Intuition behind Grafting-LDP. The active party uses noisy labels for collaborative learning so that sharing instance space meets $\epsilon$-LabelDP, resulting in the mismatch of predictions based on the original labels and noisy labels.}
    \label{fig:grafting}
\end{figure}

Our initial defense mechanism, Grafting-LDP, addresses the performance degradation of a tree trained on labels noised with the principles of differential privacy. This is achieved by \textit{grafting}, which means incorporating a new subtree trained on the original clean labels only with the active party's dataset.

Differential privacy (DP) is a popular method that rigorously quantifies information leakage from statistical algorithms~\cite{dwork2014algorithmic,alvim2011differential}. Extending DP, ~\cite{ghazi2021deep,beimel2013private,pmlr-v19-chaudhuri11a} consider the situation where only labels are sensitive information that should be protected. For example,~\cite{ghazi2021deep} defines label differential privacy (LabelDP) as follows:
\begin{definition}[Label differential privacy (LabelDP)]
    Let $\epsilon \in \mathbb{R}_{\geq 0}$, and $\mathcal{M}: \mathscr{D} \rightarrow \mathscr{O}$ be a randomized algorithm. We say that $\mathcal{M}$ is $\epsilon$-label differentially private if for any two datasets $D, D' \in \mathscr{D}$ that differ in the label of a data instance, and any $O \subseteq \mathscr{O}$, we have $\hbox{Pr}[\mathcal{M}(D) \in O] \leq e^{\epsilon} \hbox{Pr}[\mathcal{M}(D') \in O]$
\end{definition} \noindent

Subsequently, there are several methods like LP-MST~\cite{ghazi2021deep} to add noise to training labels so that the training process and the trained model on these noisy labels satisfy $\epsilon$-LabelDP. Thus, if the active party prepares the noisy training labels in advance and uses them for collaborative learning instead of the original labels, sharing the instance spaces of the model trained on those noisy labels also guarantees $\epsilon$-LabelDP, which ensures the indistinguishability of each individual label.

However, we have observed significant performance degradation when using noisy labels. This is primarily due to the fact that as the depth of the tree increases, the number of data samples assigned to a leaf decreases, amplifying the influence of noise within the noisy labels. \textcircled{\scriptsize 1} and \textcircled{\scriptsize 2} of Fig.~\ref{fig:grafting} provide an intuitive example illustrating this phenomenon. Here, the active party owns the true label Y, noised label Y', and the feature F1, while the passive party possesses another feature F2. After the joint training on the noised label, the majority of samples within the red node of the upper tree are noised. Then, the trained tree incorrectly classifies samples with F2 values lower than 2 as belonging to the negative class, although they should be classified as negative.


To address these erroneous decision paths, we propose Grafting-LDP, a post-processing algorithm applicable to bagging-based models such as Random Forest. In essence, Grafting-LDP comprises two phases: the standard collaborative training on noisy labels and post-processing to rectify the model using clean labels, all done on the active party's side (see Algo.~\ref{fig:grafting}). \textcircled{\scriptsize 3} of Fig.~\ref{fig:grafting} shows an example of this process, where the problematic split is replaced with the feature of the active party based on the original clean label so that the fixed tree appropriately fits the clean labels.

In Grafting-LDP, all parties first collaboratively train a bagging-based model using standard T-VFL schemes, where each tree is independently trained (Line 1 $\sim$ 3 in Algo.~\ref{fig:grafting}). Subsequently, the active party starts \textit{grafting} (Line 4 $\sim$ 23 in Algo.~\ref{fig:grafting}), inspecting each tree's nodes with postorder and fixing them. Specifically, it attaches a flag named "IsContam" to each node by executing a subroutine called "CheckContam," which assesses whether a node is overly contaminated due to the presence of noisy labels. IsContam of each leaf node is set to true when the majority category of samples assigned to that node, as calculated using noisy labels, does not match that of the original clean labels. For leaf nodes, the "IsContam" flag is set to true if the majority category of samples assigned to that node, based on the noisy labels, does not align with the category indicated by the original clean labels. For non-leaf nodes, the active party checks whether either the left or right child nodes have an "IsContam" flag set to true. If at least one of them does, the active party proceeds to run "CheckContam" on the current node. If the result is true, the "IsContam" flag of that node is set to true. Otherwise, the active party resorts to re-splitting that node, utilizing its own dataset along with the original clean labels.


\begin{algorithm}[!th]
\caption{Grafting-LDP}
\label{alg:Grafting-LDP}
\begin{algorithmic}[1]

\State \textbackslash* Training Phase *\textbackslash
\State $\mathcal{P}_1$ adds noise to the training labels with the given mechanism satisfying $\epsilon$-LabelDP.
\State All parties jointly train a model on the noisy labels.

\State 	\textbackslash* Grafting Phase *\textbackslash
\For{each Tree}
    \State Grafting(Tree.RootNode)
\EndFor

\State
\Function {Grafting}{Node}
\If{Node is a leaf}
    \State Node.IsContam = CheckContam(Node)
\Else
    \State Grafting(Node.Left)
    \State Grafting(Node.Right)
    \If{Node.Left.IsContam or Node.Right.IsContam}
        \If{ChechContam(Node)}
            \State Node.IsContam = True
        \Else
    \State $\mathcal{P}_1$ erases all children of Node
    \State $\mathcal{P}_1$ splits Node only with its local features on \newline \hspace*{5.7em} the original labels
        \EndIf
    \EndIf
\EndIf
\EndFunction

\Function{CheckContam}{Node}
    \State $\hat{y'} \leftarrow$ Majority category within Node on noisy labels
    \State $\hat{y} \leftarrow$ Majority category within Node on clean labels
    \State
    \Return $\hat{y'} == \hat{y}$
\EndFunction

\end{algorithmic}
\end{algorithm}

Even during the inference phase, the active party does not need to provide any additional information to the passive parties. The active party can determine whether the new sample should move to the right or left at each node of a repaired tree, even when the best feature is owned by a passive party. This determination is made by simply executing inference on the original tree \textit{prior} to its repair.


In summary, Grafting-LDP fixes the trained model by only utilizing the local features and clean labels the active party owns. Since Grafting-LDP does not need the cooperation of other passive parties, it offers a strong security guarantee, as proved in the following theorem.

\begin{thm}[Security of Grafting-LDP]
Let $\mathcal{G}: \mathscr{D} \rightarrow \mathscr{O}$ be the Grafting-LDP algorithm from the perspective of passive parties, where $O \subseteq \mathscr{O}$ be the set of information accessible during the training and the inference phase. Then, $\mathcal{G}$ satisfies $\epsilon$-LabelDP while any party cannot gain any additional information about the datasets of others.
\end{thm}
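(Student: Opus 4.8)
The plan is to decompose the claim into two independent assertions and handle each with a different tool. The first assertion is the $\epsilon$-LabelDP guarantee, and the second is the ``no additional information'' claim; I would treat the latter essentially as a transcript/simulation argument. For the first part, the key observation is that LabelDP is closed under post-processing: if $\mathcal{M}$ is the (randomized) mechanism that (i) adds noise to the labels satisfying $\epsilon$-LabelDP and (ii) runs the standard T-VFL training, producing the tree together with all instance spaces that a passive party observes, then $\mathcal{M}$ is $\epsilon$-LabelDP by the assumption on the noising mechanism and the fact that the split-finding protocol interacts with the labels only through the noisy labels. Grafting is applied afterward \emph{entirely on the active party's side} using the clean labels and the active party's own features, and — crucially — produces no new messages to the passive parties, even at inference time (the active party routes test points through the pre-repair tree). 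Hence the passive party's entire view $O$ is a (deterministic) function of the output of $\mathcal{M}$ together with the passive party's own local data $\bm X^m$, which is independent of the labels. Formally I would write $\mathcal{G} = f \circ \mathcal{M}$ for some data-independent map $f$ and invoke the post-processing property: for neighboring $D, D'$ differing in one label and any event $O$, $\Pr[\mathcal{G}(D)\in O] = \Pr[\mathcal{M}(D)\in f^{-1}(O)] \le e^{\epsilon}\Pr[\mathcal{M}(D')\in f^{-1}(O)] = e^{\epsilon}\Pr[\mathcal{G}(D')\in O]$.

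For the second assertion — no party learns additional information about others' datasets — I would argue that the passive party's view during both phases is \emph{simulatable} from what it is already assumed to hold (its own features, the instance spaces it legitimately receives under the protocol, and $|\mathbb{C}|$), so grafting leaks nothing new. The relevant facts are: (a) grafting changes only the internal bookkeeping of the active party's model; (b) during inference the active party answers routing queries using the original, pre-repair tree, which is exactly the object already revealed during training, so no post-repair structure is disclosed; and (c) re-splitting at a contaminated node uses only $\mathcal{P}_1$'s local features, so no passive party's feature values enter any message. Symmetrically, each passive party only ever sees instance spaces and split indices, never another passive party's raw features, which is the standard T-VFL guarantee inherited unchanged. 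I would phrase this as: there exists a simulator that, given only the honest-but-curious party's prescribed inputs and the ordinary T-VFL transcript, reproduces its view in Grafting-LDP exactly (the two are literally equal, since grafting adds no messages), whence no additional information is gained.

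I expect the main obstacle to be precision rather than depth: pinning down exactly what ``the set of information accessible during the training and inference phase'' $O$ is, and verifying that \emph{every} element of it is post-processing of $\mathcal{M}(D)$ and independent randomness — in particular that the inference-phase routing answers genuinely depend only on the pre-repair tree and the (public) structure, not on the clean-label re-splits. A secondary subtlety is making sure the composition across the possibly many trees and many nodes does not degrade $\epsilon$: this is fine because all label-dependent randomness is confined to the single up-front noising step (the noisy labels are fixed once and reused), so there is no sequential composition blow-up — the training protocol is a deterministic (or label-independent-randomness) function of the already-noised labels. Once those two points are nailed down, the theorem follows by combining the post-processing lemma with the trivial simulation argument.
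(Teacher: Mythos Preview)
Your proposal is correct and follows essentially the same line as the paper: invoke the post-processing property of $\epsilon$-LabelDP on the passive parties' view (which depends only on the once-noised labels), and observe that the grafting phase generates no messages to passive parties so no additional information is revealed. Your treatment is considerably more careful than the paper's two-sentence proof---in particular your attention to inference-phase routing and the absence of sequential composition blow-up---but the underlying argument is the same.
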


\begin{proof}
Recall that all information obtainable for passive parties during the training and the model utilized by passive parties during the inference are generated from the dataset with the noisy labels satisfying $\epsilon$-LabelDP. Then, the post-processing property ensures that $\mathcal{G}$ meets $\epsilon$-LabelDP. In addition, since the \textit{grafting} phase does not require the involvement of passive parties, none of the parties can obtain extra information about the datasets of other parties.
\end{proof}

\subsection{ID-LMID}

We also develop another defense named ID-LMID, based on mutual information regularization~\cite{DBLP:journals/corr/abs-2009-05241}. The existing studies on mutual information regularization~\cite{DBLP:journals/corr/abs-2009-05241,wang2016relation,cuff2016differential} assume that privacy is preserved when mutual information (MI) between the sensitive information and the knowledge accessible to the adversary is minimized or lower than the specified threshold. Then, our ID-LMID prevents label leakage by reducing the mutual information between the label and instance space. Since MI directly measures the amount of label information extractable from the instance space, restricting MI leads to less data leakage. 


We first prove we can track the upper bound of mutual information between label and instance space.

\begin{thm}
\label{thm:ourdefense}
Let $X$, $Y$ be the training data and label, respectively, and $S_{w}$ be the indicator variable for the instance space of the $w$-th node of a tree model trained with $X$ and $Y$, that is, $S_{w}=\mathbb{1}_{\mathbb{ID}_{w}}(X)$, where $\mathbb{1}_{\mathbb{ID}_{w}}$ is the indicator function for $\mathbb{ID}_{w}$, the instance space of the $w$-th node. Then, $I(Y;S_{w})$, mutual information between $Y$ and $S_w$, is bounded as follows:

\begin{equation}
\begin{split}
    \label{eq:mi:upperbound}
    I(Y;S_{w}) &\leq \max( \sum_{c \in \mathbb{C}} \frac{n_{w}^{c}}{n_{w}} \log{ \frac{ n_{w}^{c} / n_{w}}{ N^{c} / N}}, \hquad
    \sum_{c \in \mathbb{C}} \frac{\overline{n}^{c}_{w}}{\overline{n}_{w}} \log{ \frac{ \overline{n}^{c}_{w} / \overline{n}_{w}}{N^{c} / N}})
\end{split}
\end{equation}, where $N^{c}$ is the total number of samples in class $c$, $n_{w}$ is the number of samples within the $w$-th node, $n_{w}^{c}$ is the number of samples within the $w$-th node with class $c$, $\overline{n}_{w} = N - n_{w}$, and $\overline{n}^{c}_{w} = N^{c} - n_{w}^{c}$.
\end{thm}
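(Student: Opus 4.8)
The plan is to evaluate $I(Y;S_w)$ exactly under the empirical distribution over the $N$ training records and then observe that the resulting expression is a convex combination of two relative entropies, which is always at most the larger of the two.

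Concretely, I would first make the probability model explicit: draw an index uniformly from the $N$ samples, so that $S_w=\mathbb{1}_{\mathbb{ID}_w}(X)$ is a Bernoulli variable with $\Pr[S_w=1]=n_w/N$, $\Pr[S_w=0]=\overline{n}_w/N$, the label marginal is $\Pr[Y=c]=N^c/N$, and the joints are $\Pr[Y=c,S_w=1]=n_w^c/N$ and $\Pr[Y=c,S_w=0]=\overline{n}_w^c/N$. Plugging these into $I(Y;S_w)=\sum_{c\in\mathbb{C}}\sum_{s\in\{0,1\}}\Pr[Y=c,S_w=s]\log\frac{\Pr[Y=c,S_w=s]}{\Pr[Y=c]\Pr[S_w=s]}$ and grouping the $s=1$ and $s=0$ terms gives
\[
I(Y;S_w)=\frac{n_w}{N}\sum_{c\in\mathbb{C}}\frac{n_w^c}{n_w}\log\frac{n_w^c/n_w}{N^c/N}+\frac{\overline{n}_w}{N}\sum_{c\in\mathbb{C}}\frac{\overline{n}_w^c}{\overline{n}_w}\log\frac{\overline{n}_w^c/\overline{n}_w}{N^c/N},
\]
which is exactly the standard identity $I(Y;S_w)=\sum_{s}\Pr[S_w=s]\,D_{\mathrm{KL}}\!\big(\Pr[Y\mid S_w=s]\,\big\|\,\Pr[Y]\big)$ specialized to the binary variable $S_w$; here each inner sum is a genuine KL divergence between the within-node (resp.\ outside-node) class distribution and the global class distribution, and in particular is nonnegative.

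Since $n_w/N\ge 0$, $\overline{n}_w/N\ge 0$, and $n_w/N+\overline{n}_w/N=1$, the right-hand side is a convex combination of the two nonnegative KL terms, hence bounded above by their maximum, which is precisely inequality~\eqref{eq:mi:upperbound}. The remaining work is purely bookkeeping: adopt the convention $0\log 0=0$ so that empty classes contribute nothing, and note the degenerate cases ($n_w=0$, or $n_w=N$ as at the root) make one weight vanish and force $I(Y;S_w)=0$, so the bound holds trivially. I do not anticipate a genuine obstacle here; the only points needing care are stating the sampling model so that the counts become honest probabilities, and verifying that the two inner sums are legitimately relative entropies so that both the nonnegativity and the ``convex combination $\le$ maximum'' step apply cleanly.
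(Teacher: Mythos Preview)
Your proposal is correct and follows essentially the same route as the paper: write $I(Y;S_w)=E_{S_w}[D_{KL}(P(Y\mid S_w)\,\|\,P(Y))]$, instantiate the probabilities with the empirical counts, and bound the convex combination of the two KL terms by their maximum. Your version is slightly more explicit about the uniform-sampling model and the degenerate cases, but the argument is the same.
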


\begin{proof}
    Since by definition $I(Y;S_{w}) =$ $E_{S_{w}}[D_{KL}$ $(P(Y|S_{w})$ $||$ $ P(Y))]$, where $D_{KL}$ is KL-Divergence, and $E$ is expected value, we have the upper bound as follows:

    \begin{equation}
    \begin{split}
    \label{eq:mi_y_s}
        I(Y;S_{w}) &= E_{S_{w}}[D_{KL}(P(Y|S_{w}) \hquad || \hquad P(Y))] \\
        &\leq \max_{s \in \{0, 1\}}(D_{KL}(P(Y|S_{w}=s) \hquad || \hquad P(Y))) \\
        &= \max_{s \in \{0, 1\}}(\sum_{c \in \mathbb{C}}
        P_{Y|S_{w}=s}(c) \log{\frac{P_{Y|S_{w}=s}(c)}{P_{Y}(c)}})
    \end{split}
    \end{equation} Recall that $S_{w}$ is 1 when the data belongs to the node, and 0 otherwise. Since $P(Y)$ is the label distribution, $P(Y|S_{w}=1)$, and $P(Y|S_{w}=0)$ are the label distribution within the $w$-th node and outside the $w$-th node, respectively, we can empirically approximate these terms as follows:

    \begin{align}
    \label{eq:pms}
        P_{Y}(c) = N_c / N, \hquad
        P_{Y|S_{w}=1}(c) = n^{w}_c / n^{w}, \hquad
        P_{Y|S_{w}=0}(c) = \overline{n}^{w}_{c} / \overline{n}^{w}
    \end{align}  Combining Eq.~\ref{eq:mi_y_s} and Eq.~\ref{eq:pms} yields Eq.~3.
\end{proof}




Based on Theorem~\ref{thm:ourdefense}, if the active party aims to control $I(Y;S_{w})$ not to exceed an arbitrary value $\xi$, it can achieve this goal by making any node visible to passive parties satisfy the following condition:

\begin{equation}
\label{eq:lmir:cond}
\begin{split}
\max( &\hbox{$\sum_{c \in \mathbb{C}}$} (n_{w}^c/n_{w}) \log{ (n_{w}^{c} N / n_{w} N^{c})}, \\
    &\hbox{$\sum_{c \in \mathbb{C}}$} (\overline{n}^{c}_{w} / \overline{n}_{w}) \log{( \overline{n}^{c}_{w} N / \overline{n}_{w} N^{c})}
    ) \leq \xi
\end{split}
\end{equation}

We also show that with this threshold set, the passive party cannot learn more information.

\begin{corollary}
   Eq.~\ref{eq:lmir:cond} guarantees that the attacker cannot get more label information than threshold $\xi$ by applying any mechanism $\mathcal{M}$ to the instance space.
\end{corollary}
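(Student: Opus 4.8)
The plan is to derive the corollary from Theorem~\ref{thm:ourdefense} by a single application of the data-processing inequality, so the argument is short; the care goes into setting up the Markov chain correctly under the stated threat model.

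First I would fix a node $w$ visible to the passive party and recall that, by the threat model, the only way information about $Y$ reaches the attacker through that node is via the instance-space indicator $S_w=\mathbb{1}_{\mathbb{ID}_w}(X)$: the attacker holds $X^m$ together with the realized sets $\mathbb{ID}^t_u$ (i.e. the values of the $S_w$'s) and nothing else that is correlated with $Y$ (honest-but-curious, no auxiliary information, no side channels). Any ``mechanism'' $\mathcal{M}$ the attacker applies therefore outputs $Z=\mathcal{M}(S_w)$ whose conditional law given $(X,Y)$ depends only on $S_w$, with internal randomness independent of $(X,Y)$. Consequently $Y \to S_w \to Z$ is a Markov chain.

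Second, I would invoke the data-processing inequality for mutual information, $I(Y;Z)\le I(Y;S_w)$. Third, I would plug in Theorem~\ref{thm:ourdefense}: the right-hand side of Eq.~\ref{eq:mi:upperbound} is exactly the quantity appearing on the left of Eq.~\ref{eq:lmir:cond}, so enforcing Eq.~\ref{eq:lmir:cond} forces $I(Y;S_w)\le\xi$, and hence $I(Y;Z)\le\xi$. Interpreting ``label information'' as mutual information, this is precisely the claim.

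If the intended reading is that the attacker jointly processes all of the instance spaces $\{\mathbb{ID}^t_u\}_{(t,u)\in\mathcal{P}_m}$ at once, I would run the same argument with $S_w$ replaced by the whole tuple of those indicators, which is again a deterministic function of $X$, so the Markov property still holds; the only genuinely new ingredient is then a joint version of Theorem~\ref{thm:ourdefense} controlling $I(Y;(S_{w_1},\dots,S_{w_r}))$ rather than each $I(Y;S_{w_j})$ separately. The main obstacle is thus not the DPI step (routine) but this scope question: either one restricts the statement to a single released node, matching Theorem~\ref{thm:ourdefense} verbatim, or one establishes subadditivity of the per-node bounds across the released nodes. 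I would also make the no-auxiliary-information assumption explicit, since that is exactly what licenses the Markov property — if $\mathcal{M}$ could be fed data correlated with $Y$, the conclusion would fail.
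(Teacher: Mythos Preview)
Your proposal is correct and follows essentially the same approach as the paper: the paper's proof is a one-liner invoking Theorem~\ref{thm:ourdefense} together with the data-processing inequality to conclude $I(Y;\mathcal{M}(S_w))\le I(Y;S_w)\le\xi$. Your additional discussion of the Markov-chain justification, the scope question (single node versus joint release), and the role of the no-auxiliary-information assumption goes beyond what the paper spells out, but the core argument is identical.
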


\begin{proof}
     Based on Theorem~\ref{thm:ourdefense} and following data processing inequality~\cite{cover1999elements}, the output of any mechanism $\mathcal{M}$ that takes $S_{w}$ cannot increase mutual information:
\begin{equation}
\label{eq:composition_lmid}
    I(Y;\mathcal{M}(S_{w})) \leq I(Y;S_{w}) \leq \xi
\end{equation}
\end{proof}

\begin{minipage}[!th]{0.49\textwidth}
\begin{figure}[H]
    \includegraphics[width=\linewidth]{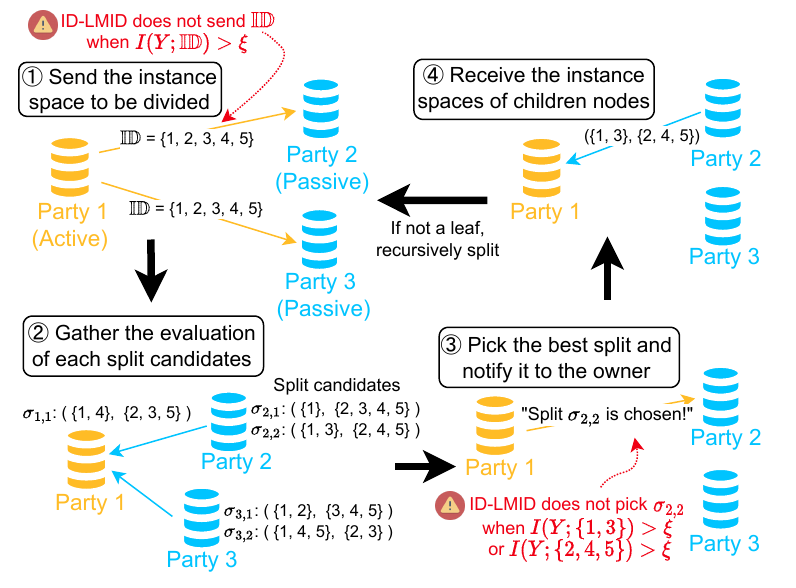}
    \caption{Example of Split Finding within ID-LMID, where ID-LMID identifies potential leakage in instance spaces from both the active and passive sides, and prevents passive parties from accessing the instance spaces that break Eq.~\ref{eq:lmir:cond}.}
    \label{fig:tree-vul}
\end{figure}
\end{minipage}
\hfill
\begin{minipage}[!th]{0.50\textwidth}
\begin{algorithm}[H]
\caption{Split Finding with ID-LMID}
\label{alg:idmlid}
\begin{algorithmic}[1]

\State $\mathcal{P}_{1}$ gathers the evaluation of split candidates $\Psi = \bigcup^{M}_{m=1} \Psi^{m}$.
\For{each split candidate $\sigma_{m, k} \in \Psi$}
\If{$m \geq 2$ and left or right child produced \par\hskip\algorithmicindent by $\sigma_{m, k}$ does not
satisfy Eq.~\ref{eq:lmir:cond}}
\State $\Psi \leftarrow \Psi \setminus \{ \sigma_{m, k} \}$
\EndIf
\EndFor

\State $\mathcal{P}_{1}$ picks the best split $\sigma_{m^{*}, k^{*}}$ from $\Psi$.

\If{left or right child produced by $\sigma_{m^{*}, k^{*}}$ does not satisfy Eq.~\ref{eq:lmir:cond}}
\State If not terminated, all children are trained \newline \hspace*{1.4em} only with the active party $\mathcal{P}_{1}$.
\Else
\State $\mathcal{P}_{m^{*}}$ receives $k^{*}$ and sends the instance  \newline \hspace*{1.4em}  spaces of children nodes to $\mathcal{P}_{1}$.
\State If not terminated, the children are \newline \hspace*{1.4em} recursively trained with all parties.
\EndIf
\end{algorithmic}
\end{algorithm}
\end{minipage}

Then, we propose ID-Label Mutual Information-based Defense (ID-LMID), which makes all instance spaces visible to passive parties satisfy $I(Y;S_{w}) \leq \xi$ under Theorem~\ref{thm:ourdefense}.
Note that under a T-VFL scheme, we observe that a passive party can obtain the instance space of a node under two conditions:  1) it knows the threshold that produces that node, or 2) it directly receives the instance space from the active party.
Based on the above observations, ID-LMID introduces two MI constraints to the original T-VFL protocol when finding the best split for each node. Specifically, to avoid leakage under condition 1),  the active party does not adopt any split candidate from a passive party that generates a child node violating Eq.~\ref{eq:lmir:cond}, which eliminates all of the unsatisfactory split candidates from passive parties. To avoid leakage from condition 2), the active party does not broadcast the instance space when the left or right node of the split breaks Eq.~\ref{eq:lmir:cond}, but searches the best split exclusively using its own dataset.

Algo.~\ref{alg:idmlid} presents the overview of ID-LMID, where $\sigma_{m, k}$ is $k$-th split candidate of $m$-th party, and $\Psi^{m}$ denotes the set of split candidates of $m$-th party. Line 2 $\sim$ 4 implements the first constraint, and Line 6 $\sim$ 10 corresponds to the second constraint. Finally, evaluating the first constraint (Line 4 in Algo.~\ref{alg:idmlid}) requires the active and passive party to calculate node purity ($n^{c}_{w} / n_{w}$ and $\bar{n}^{c}_{w} / \bar{n}_{w}$) in Eq.~\ref{eq:lmir:cond} in a secure manner without exposing sensitive information to each other, which has been widely studied before using multi-party computation (MPC)~\cite{10.1007/11562382_75,9892321}, homomorphic encryption(HE)~\cite{wu2020privacy,liu2020federated,XU2023237,yao2022efficient}, or without protection~\cite{hou2021verifiable,9514457}. This work adopts the Homomorphic Encryption(HE)-based implementation ~\cite{XU2023237,yao2022efficient} and proposes our algorithm for evaluating Eq.~\ref{eq:lmir:cond} (See Algo~\ref{alg:secure-lp}). In Algo~\ref{alg:secure-lp}, we use $y^{c}_{i}$ to denote the $c$-th position of the one-hot encoded label $y_i$, and use $\llbracket \cdot \rrbracket$ to denote a value encrypted with Paillier Encryption~\cite{paillier1999public}, which is popular HE technique that allows the addition between ciphertexts and multiplication between ciphertext and plaintext. Then, the node purity of each split candidate can be securely calculated by summing the corresponding encrypted labels. Similar to many existing frameworks~\cite{cheng2021secureboost,chen2021secureboost+,XU2023237,yao2022efficient}, this procedure only discloses the aggregated statistics to the active party, and no other information is revealed to any party.

\begin{algorithm}[!th]
\caption{Seure Computation of Node Purity with HE}
\label{alg:secure-lp}
\begin{algorithmic}[1]

\State $\mathcal{P}_{1}$ encrypts one-hot encoded label $\{\{y^{c}_{i}\}^{C}_{c=1}\}^{N}_{i=1}$ with Paillier Encryption and broadcasts $\{\{\llbracket y^{c}_{i} \rrbracket \}^{C}_{c=1}\}^{N}_{i=1}$ to all passive parties.

\For{$m \leftarrow 2 ... M$}
\For{$\sigma_{m,k} \in \Psi^{m}$}
    \State $\mathbb{ID}_{L}, \hquad \mathbb{ID}_{R} \leftarrow$ Instance space of the left and right child nodes divided with $\sigma_{m,k}$
    \State $\overline{\mathbb{ID}_{L}} \leftarrow \{1, 2, ..., N\} \setminus \mathbb{ID}_{L}, \hquad \overline{\mathbb{ID}_{R}} \leftarrow \{1, 2, ..., N\} \setminus \mathbb{ID}_{R}$
    \State $\mathcal{P}_{m}$ sends $\{ (1 / |\mathbb{ID}_{L}|) \sum_{i \in \mathbb{ID}_{L}} \llbracket y^{c}_{i} \rrbracket \}^{C}_{c=1}$ and $\{ (1 / |\overline{\mathbb{ID}_{L}}|) \sum_{i \in \overline{\mathbb{ID}_{L}}} \llbracket y^{c}_{i} \rrbracket \}^{C}_{c=1}$ to $\mathcal{P}_{1}$
    \State $\mathcal{P}_{m}$ sends $\{ (1 / |\mathbb{ID}_{R}|) \sum_{i \in \mathbb{ID}_{R}} \llbracket y^{c}_{i} \rrbracket \}^{C}_{c=1}$ and $\{ (1 / |\overline{\mathbb{ID}_{R}}|) \sum_{i \in \overline{\mathbb{ID}_{R}}} \llbracket y^{c}_{i} \rrbracket \}^{C}_{c=1}$ to $\mathcal{P}_{1}$
\EndFor
\State $\mathcal{P}_{1}$ decrypts purities submitted by $\mathcal{P}_{m}$ for both children and evaluates Eq.~\ref{eq:lmir:cond}.
\EndFor
\end{algorithmic}
\end{algorithm}

\subsection{Grafting-LDP vs ID-LMID}
\label{subseq:defense-vs}

On the one hand, the advantage of Grafting-LDP is that the training procedure with the passive parties guarantees $\epsilon$-LDP, which is the rigorous and well-studied notion of privacy while not requiring additional communication. Its implementation is simple, as shown in Algo.~\ref{alg:Grafting-LDP}, and the practitioners can easily apply Grafting-LDP to the existing bagging-based T-VFL. However, applying Grafting-LDP to boosting methods like XGBoost is inappropriate since each tree is not independently trained. For example, in XGBoost, the $i$-th tree is trained to fit the residual between the ground truth and the prediction of the prior $i-1$ trees.  If we repair the $i-1$-th tree with grafting, the residual, which is the objective for the $i$-th tree to fit, changes so that we have to re-train the entire $i$-th tree, not its subtree. In other words, if we forcibly apply Grafting-LDP to boosting-based models, we can execute grafting only for the first tree, and the other trees should be re-trained from scratch with only the active party, which is incompatible with VFL.

On the other hand, ID-LMID can be applicable for both boosting and bagging. Although ID-LMID might need additional communication to evaluate the upper bounds of MI, many implementations of Random Forest for T-VFL~\cite{XU2023237,yao2022efficient} already communicate the same HE-encrypted purities for tree training, eliminating the need for additional communication costs. \cite{cheng2021secureboost} also shows that summing up the ciphertext encrypted with Paillier Encryption for each threshold candidate and communicating them can scale for large datasets. In addition, the existing study~\cite{DBLP:journals/corr/abs-2009-05241} reveals that mutual regularization gives a better privacy-utility tradeoff compared to differential privacy, which is compatible with our experiment in Sec.~5.


\section{Experiments}
\label{seq:experiments}

We conduct comprehensive experiments on various datasets to demonstrate the effectiveness of our proposed attack and defense algorithms. We also show various factors that influence the performance of our approaches, including the impact of feature importance, tree depth, and the number of trees.

\paragraph{Datasets and Models}

We conduct experiments on a two-party VFL system over nine different datasets: 1) \textit{Breastcancer}; 2) \textit{Parkinson}; 3) \textit{Obesity}; 4) \textit{Phishing}; 5) \textit{Avika}; 6) \textit{Drive}; 7) \textit{Fmnist}; 8) \textit{Fars} and 9) \textit{Purcio}. We utilize 80\% of each dataset for training and the remaining 20\% of data as the test dataset. All the labels are held by one of two parties, the active party, while the other passive party tries to steal the labels. For all datasets except \textit{Fmnist}, we vertically and randomly partition features into two halves as the local datasets of the two parties. Since \textit{Fmnist} consists of images, we divided each image equally into left and right and gave one side to the active party and the other to the passive party. Note that from the attacker's perspective, this setting is the same as when the attacker possesses 50\% of features and one active and multiple passive parties have the remaining features. Tab.~\ref{tab:datasets} shows the details of each dataset. The number of datasets and their scales are larger than many related studies~\cite{cheng2021secureboost,wu2020privacy,liu2021enabling,li2022opboost,zhu2021pivodl}.

\begin{table}[!th]
\caption{Statistics of Datasets}
\label{tab:datasets}
\centering
\begin{tabular}{|c||c|c|c|}
\toprule
Dataset               & \#Samples & \#Features & \#Classes\\ \midrule
\textit{Breastcancer}~\cite{Dua:2019} & 569               & 30                 & 2 \\
\textit{Parkinson}~\cite{Dua:2019}    & 756               & 754                & 2 \\
\textit{Obesity}~\cite{Dua:2019,palechor2019dataset}      & 2111              & 17                 & 7 \\
\textit{Phishing}~\cite{Dua:2019}     & 11055             & 30                 & 2 \\
\textit{Avila}~\cite{Dua:2019,10.1016/j.engappai.2018.03.023}        & 20867             & 10                 & 12 \\
\textit{Drive}~\cite{Dua:2019}        & 58509             & 49                 & 11 \\
\textit{Fmnist}~\cite{xiao2017/online}       & 60000             & 784                & 10 \\
\textit{Fars}~\cite{Olson2017PMLB}         & 100968            & 30                 & 8 \\
\textit{Pucrio}~\cite{Dua:2019,ugulino2012wearable}       & 165632            & 18                 & 5 \\ \bottomrule
\end{tabular}
\end{table}

\begin{table}[!th]
\scriptsize
\caption{Attack results (CL: clustering, UNI: Union Attack). We measure the performance of each attack with V-measure. ID2Graph leads to better and more stable grouping than baseline on all metrics and models.}
    \label{tab:results:attack}
    \centering
\begin{tabular}{@{}c||c|ccc|ccc@{}}
\toprule
                   &                & \multicolumn{3}{c|}{Random Forest}                         & \multicolumn{3}{c}{XGBoost}                               \\ \midrule
Dataset            & CL             & UNI            & UNI+CL         & ID2Graph                & UNI            & UNI+CL         & ID2Graph                \\ \midrule
\textit{Breastcancer} & 0.554 (±0.084) & 0.000 (±0.000) & 0.554 (±0.084) & \textbf{0.751 (±0.106)} & 0.000 (±0.000) & 0.554 (±0.084) & \textbf{0.736 (±0.133)} \\
\textit{Parkinson} & 0.091 (±0.011) & 0.035 (±0.079) & 0.094 (±0.011) & \textbf{0.349 (±0.100)} & 0.000 (±0.000)     & 0.091 (±0.011) & \textbf{0.224 (±0.111)} \\
\textit{Obesity}   & 0.254 (±0.029) & 0.000 (±0.000) & 0.254 (±0.029) & \textbf{0.610 (±0.079)} & 0.000 (±0.000)     & 0.254 (±0.029) & \textbf{0.549 (±0.114)} \\
\textit{Phishing}     & 0.001 (±0.001) & 0.196 (±0.268) & 0.202 (±0.276) & \textbf{0.352 (±0.156)} & 0.196 (±0.268) & 0.202 (±0.276) & \textbf{0.342 (±0.183)} \\
\textit{Avila}     & 0.085 (±0.057) & 0.043 (±0.095) & 0.107 (±0.054) & \textbf{0.252 (±0.050)} & 0.000 (±0.000)     & 0.085 (±0.057) & \textbf{0.200 (±0.053)} \\
\textit{Drive}     & 0.283 (±0.042) & 0.181 (±0.101) & 0.302 (±0.034) & \textbf{0.729 (±0.032)} & 0.000 (±0.000)     & 0.283 (±0.042) & \textbf{0.660 (±0.045)} \\
\textit{Fmnist}    & 0.431 (±0.002) & 0.133 (±0.182) & 0.432 (±0.002) & \textbf{0.525 (±0.013)} & 0.000 (±0.000)     & 0.431 (±0.002) & \textbf{0.513 (±0.025)} \\
\textit{Fars}      & 0.187 (±0.081) & 0.506 (±0.287) & 0.454 (±0.131) & \textbf{0.564 (±0.056)} & 0.224 (±0.311) & 0.308 (±0.158) & \textbf{0.568 (±0.060)} \\
\textit{Pucrio}       & 0.075 (±0.025) & 0.238 (±0.217) & 0.217 (±0.105) & \textbf{0.555 (±0.035)} & 0.159 (±0.217) & 0.170 (±0.114) & \textbf{0.470 (±0.073)} \\ \bottomrule
\end{tabular}
\end{table}

We employ Random Forest and XGBoost as the target tree-based methods, with a depth of 6, a feature sub-sampling ratio of 0.8, and a tree size of 5 as the default setting. We use a learning rate of 0.3 and cross-entropy loss for XGBoost. Our hyperparameters are consistent with prior works, with the sampling ratio and learning rate from~\cite{cheng2021secureboost}, the depth from~\cite{chen2016xgboost}, and the number of trees based on~\cite{wu2020privacy}. Since our main focus is evaluating potential attack and defense performance rather than achieving high accuracy on the main task of VFL, we did not perform fine-tuning of these hyperparameters.

For evaluation metrics, we use V-measure~\cite{rosenberg2007v}, one of the famous metrics for clustering, to measure how accurately the estimated clusters correspond to ground-truth labels. V-measure varies from 0.0 to 1.0, where 1.0 stands for perfectly accurate clustering. All results are averaged over five different random seeds.

\paragraph{Baselines}

To evaluate our ID2Graph Attack, we compare with the following baselines: 1) Clustering (CL). We apply k-means clustering to the attacker's local features as a baseline for label leakage. 
2) Union Attack (UNI). UNI is the naive approach in Secureboost~\cite{cheng2021secureboost}, where the attacker approximates that two samples have the same label if these two samples are assigned together to at least one node. 3)UNI+CL. This method combines UNI and CL by using the result of UNI as additional features for k-means clustering. To evaluate defenses, we compare them to two existing mechanisms: Reduced-Leakage~\cite{cheng2021secureboost} and LP-MST~\cite{ghazi2021deep}, which protects labels based on randomized response. Since Reduced-Leakage is intended for gradient-boosting, we apply it solely to XGBoost.

\paragraph{Hyper Parameters}

For ID2Graph, we use $\eta$ of 0.6 for XGBoost and 1.0 for Random Forest, and $\alpha$ of 3. We adopt the memory-efficient adjacency matrix for \textit{Drive}, \textit{Fmnist}, \textit{Fars}, and \textit{Pucrio} with a chunk size of 1000 and inter-chunk weight of 100. The stopping criteria of K-means is max iterations of 300 or relative tolerance concerning the Frobenius norm of the difference in cluster centers between two successive iterations of 1e-4, which are the default parameters of \textit{sklearn}~\cite{scikit-learn}, one of the most popular libraries for machine learning. The stopping criteria of the Louvin Method is the max iterations of 100 or tolerance concerning the modularity between two consecutive iterations of 1e-6 to achieve compatible precision with other famous implementations like~\cite{noauthororeditorneo4j,python_louvain}. For defense, we exhaustively search the different privacy budgets to see their privacy-utility trade-offs. Specifically, we set both $\epsilon$ of LP-MST and Grafting-LDP and $\xi$ of ID-LMID to [0.1, 0.5, 1.0, 2.0]. Here, We employ two stages (LP-2ST) for LP-MST, and Grafting-LDP is applied to the Random Forest model trained on LP-MST.

\subsection{Attack Results}

\paragraph{Main Results}

Tab.~\ref{tab:results:attack} summarizes V-measure scores of different attacks. We report the average and the standard deviation of five trials with different random seeds. Results show that ID2Graph leads to higher V-measures than clustering on local features only, implying that the proposed attack can steal private label information from the trained model. ID2Graph also outperforms Union Attack, indicating that ID2Graph can extract more label information from the instance space than Union Attack. Union Attack, in some cases, ends up assigning all samples to the same cluster, which renders the average of V-measure zero. The combination of Union Attack and clustering is still less effective than ID2Graph.





\paragraph{Impact of Feature Partitions}

Additionally, we study how the quality of the attacker's local features impacts the attack performance. For all datasets except \textit{Fmnist}, we use mutual information~\cite{kraskov2004estimating} between each feature and label to quantify feature importance and sort features in descending order. Then, we distribute the top k-percentile features to the attacker and assign the remaining features to the active party. As \textit{Fmnist} is the image dataset, we simply assign the left $k$ percent of each image to the attacker and the rest of the image to the active party. Fig.~\ref{fig:feature} shows the attack results of each method on various percentiles (left y-axis) and feature importance (right y-axis). Note that the left y-axises have different scales for better visibility. ID2Graph surpasses other baselines in most cases, indicating that the extracted community information stabilizes clustering regardless of the constitution of the local dataset. We also notice that the V-measure does not continuously improve for certain datasets as the number of available features increases since including non-informative or bad features may counteract the label inference performance.

\begin{figure}[!th]
    \centering
    \begin{minipage}[b]{0.48\linewidth}
      \centering
      \includegraphics[width=\linewidth]{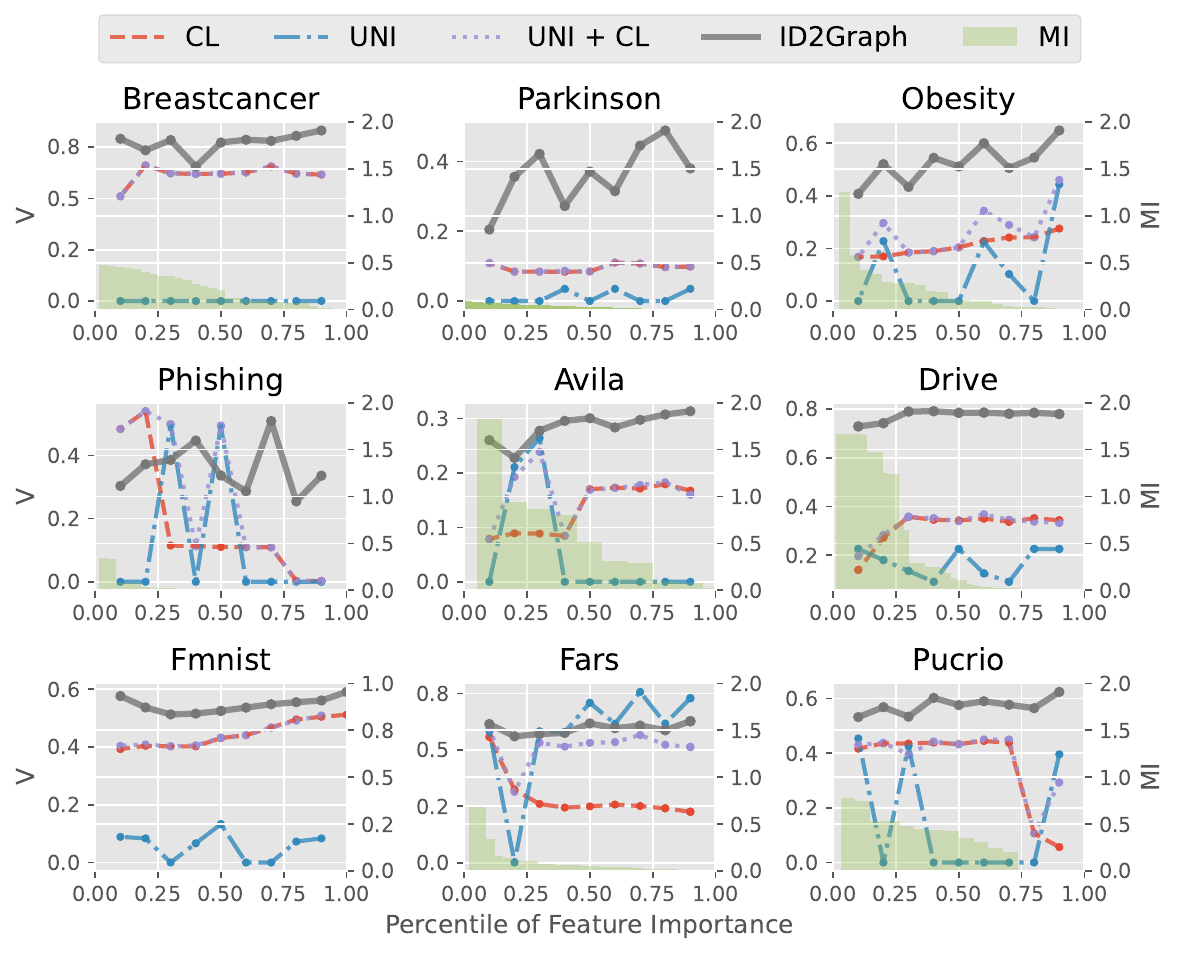}
      \subcaption{Random Forest}
    \end{minipage}
    \hspace{0.002\columnwidth}
    \begin{minipage}[b]{0.48\linewidth}
      \centering
        \includegraphics[width=\linewidth]{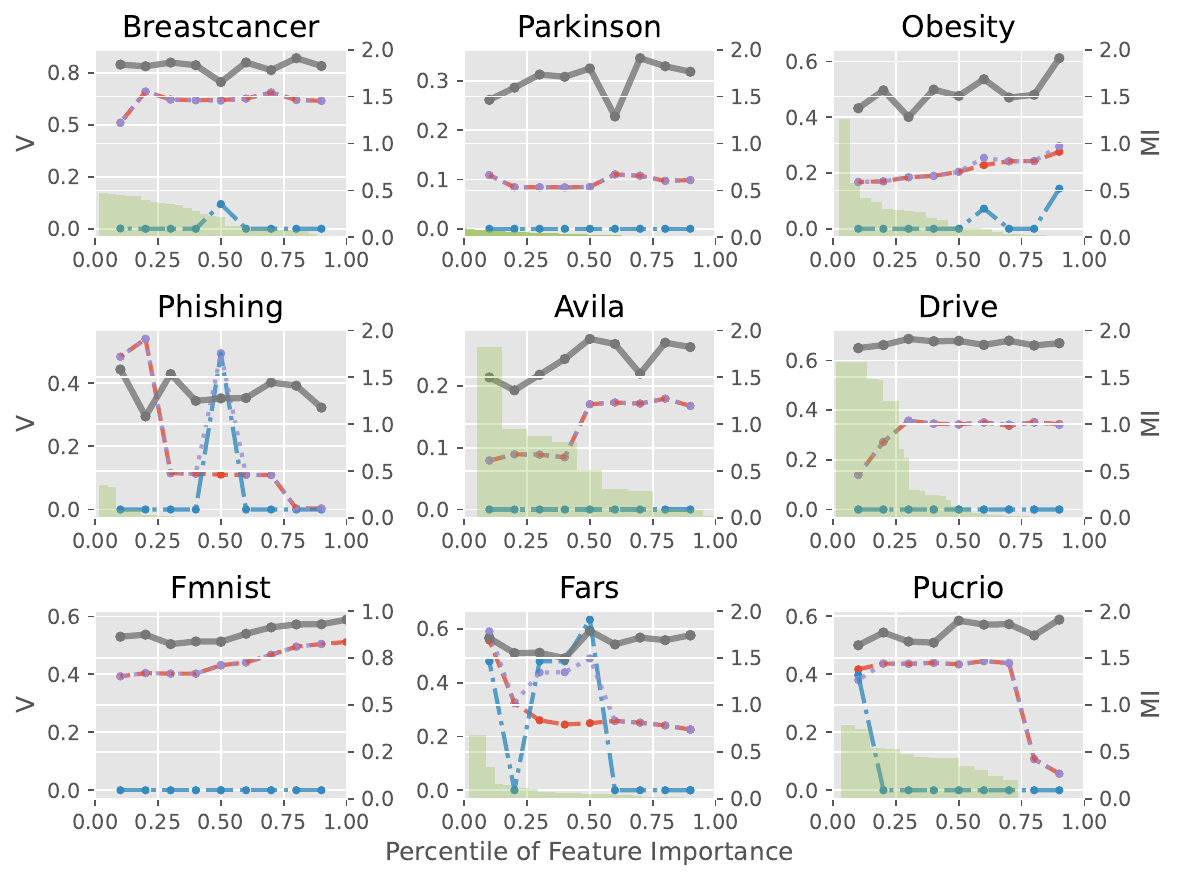}
  \subcaption{XGBoost}
    \end{minipage}
  \caption{Impact of feature partition on attack performance. ID2Graph outperforms baselines in most cases regardless of the informativeness of the local dataset. For all datasets except \textit{Fmnist}, the X-axis is the top \% important features the attacker has. Feature importances based on mutual information scores are represented by bars. For \textit{Fmnist}, the X-axis is the left \% of images the attacker owns. We use different scales for y-axes for better viewing.}
  \label{fig:feature}
\end{figure}

\paragraph{Impact of Tree Depth.}

The performance of ID2Graph is also impacted by the maximum depth constraint of the tree model, as shown in Fig.~\ref{fig:depth}. Increasing the depth generally leads to more label leakage in ID2Graph. Still, it's worth noting that going too deep can sometimes worsen attack results. As the depth of the tree increases, the instance space at the leaves becomes purer but smaller. In other words, fewer classes and samples are assigned in the instance space as the tree splits. ID2Graph relies on the assumption that the data samples within the instance space have similar class labels. While fewer classes can be beneficial, having too small of an instance space can be detrimental. Extremely, if all the leaves contain only one sample, it is hard to extract useful relationship information.

\begin{figure}
  \begin{minipage}[b]{0.48\linewidth}
    \centering
    \includegraphics[width=\linewidth]{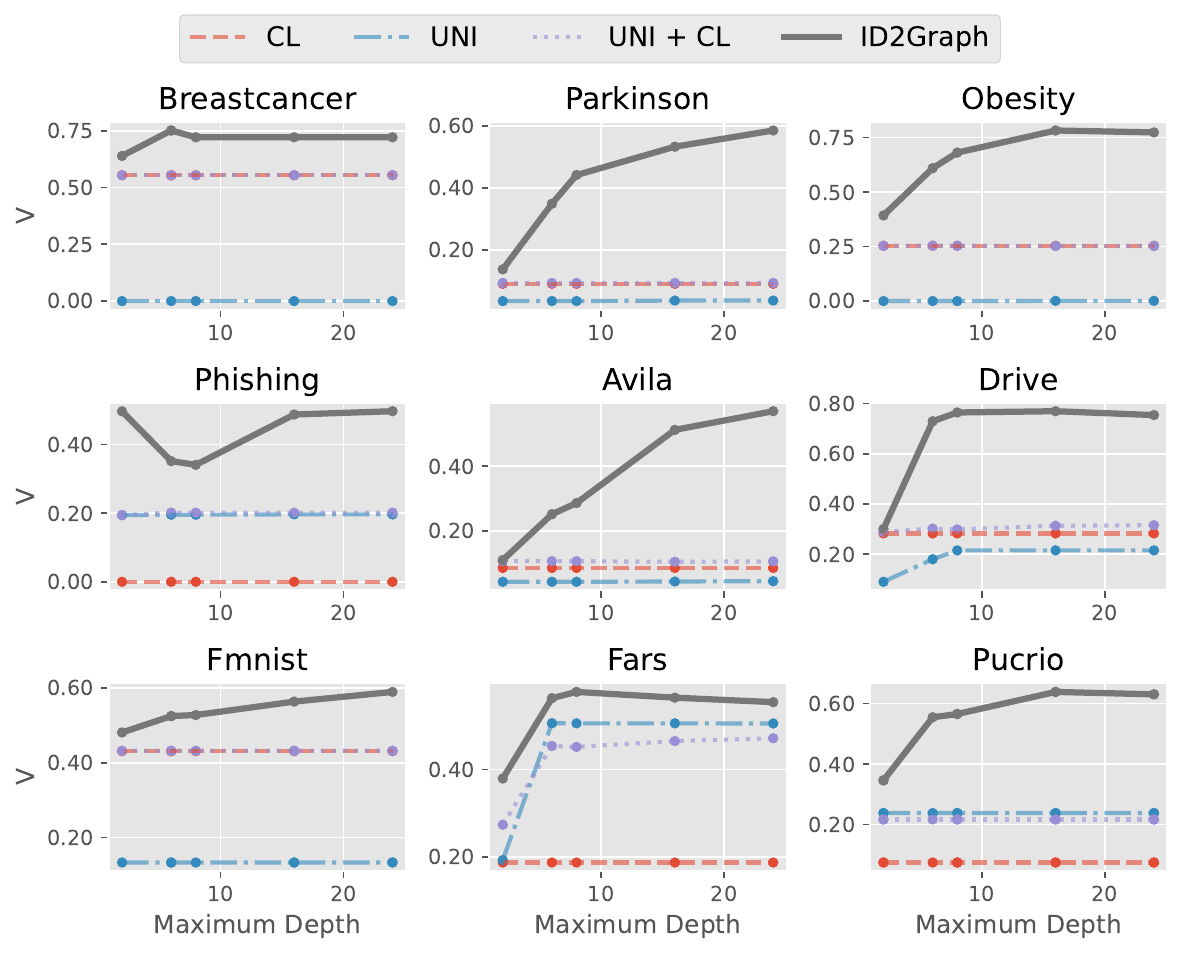}
    \subcaption{Random Forest}
  \end{minipage}
  \hspace{0.002\columnwidth}
  \begin{minipage}[b]{0.48\linewidth}
    \centering
    \includegraphics[width=\linewidth]{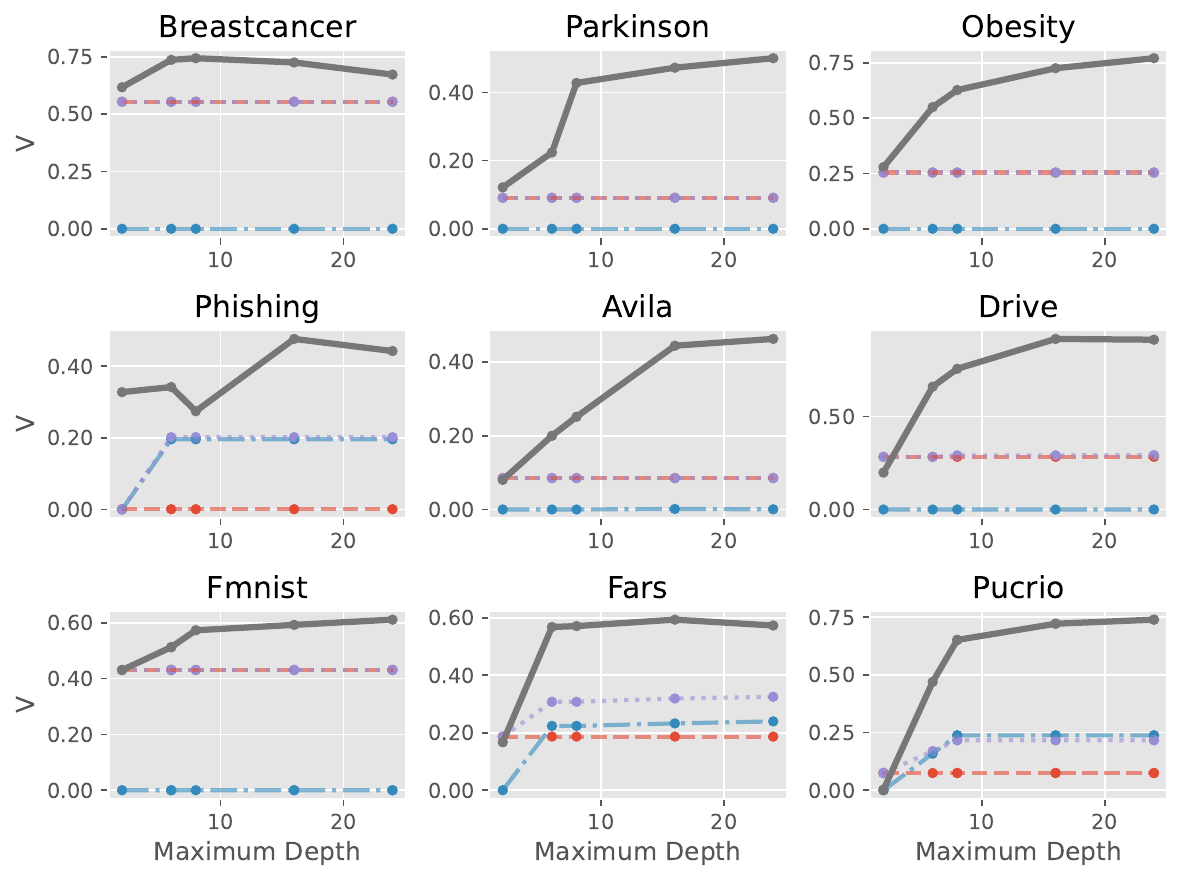}
    \subcaption{XGBoost}
  \end{minipage}
  \caption{Impact of tree depth on attack performance. Deeper depth improves the attack performance of ID2Graph in many cases, but too deep depth can be harmful in some cases. We use different scales for y-axes for better viewing.}
  \label{fig:depth}
\end{figure}

\paragraph{Impact of Number of Trees}

We also investigate the impact of the number of trees on attack performance. As shown in Fig.~\ref{fig:round}, ID2Graph outperforms other baselines in most settings. Fig.~\ref{fig:round} also reveals that while increasing the number of trees generally improves the attack performance, too many trees decrease attack performance, especially when attacking XGBoost. This is compatible with a prior work~\cite{cheng2021secureboost}, which finds that the latter trees in the forest have relatively less information about training labels. 

\begin{figure}
  \begin{minipage}[b]{0.48\linewidth}
    \centering
    \includegraphics[width=\linewidth]{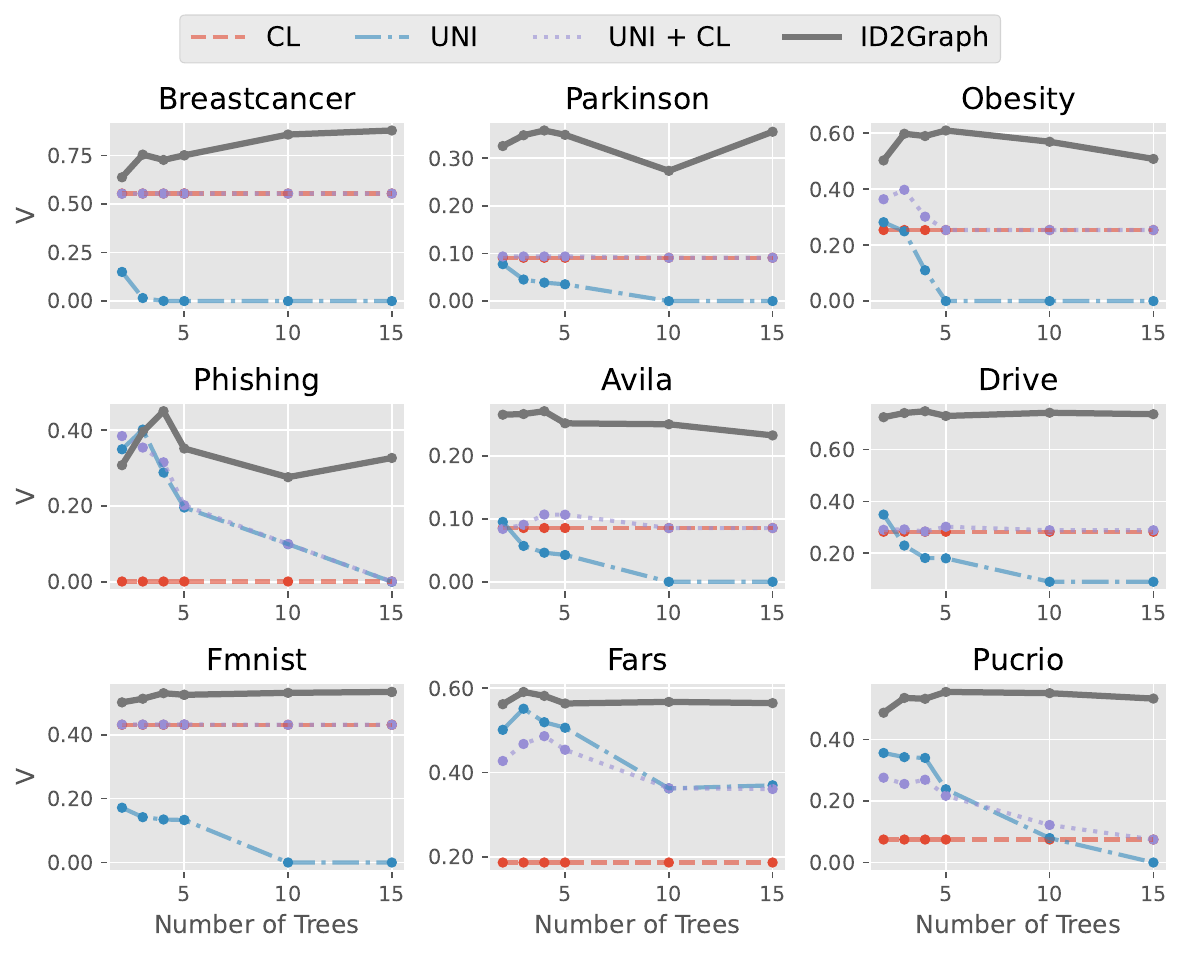}
    \subcaption{Random Forest}
  \end{minipage}
  \hspace{0.002\columnwidth}
  \begin{minipage}[b]{0.48\linewidth}
    \centering
    \includegraphics[width=\linewidth]{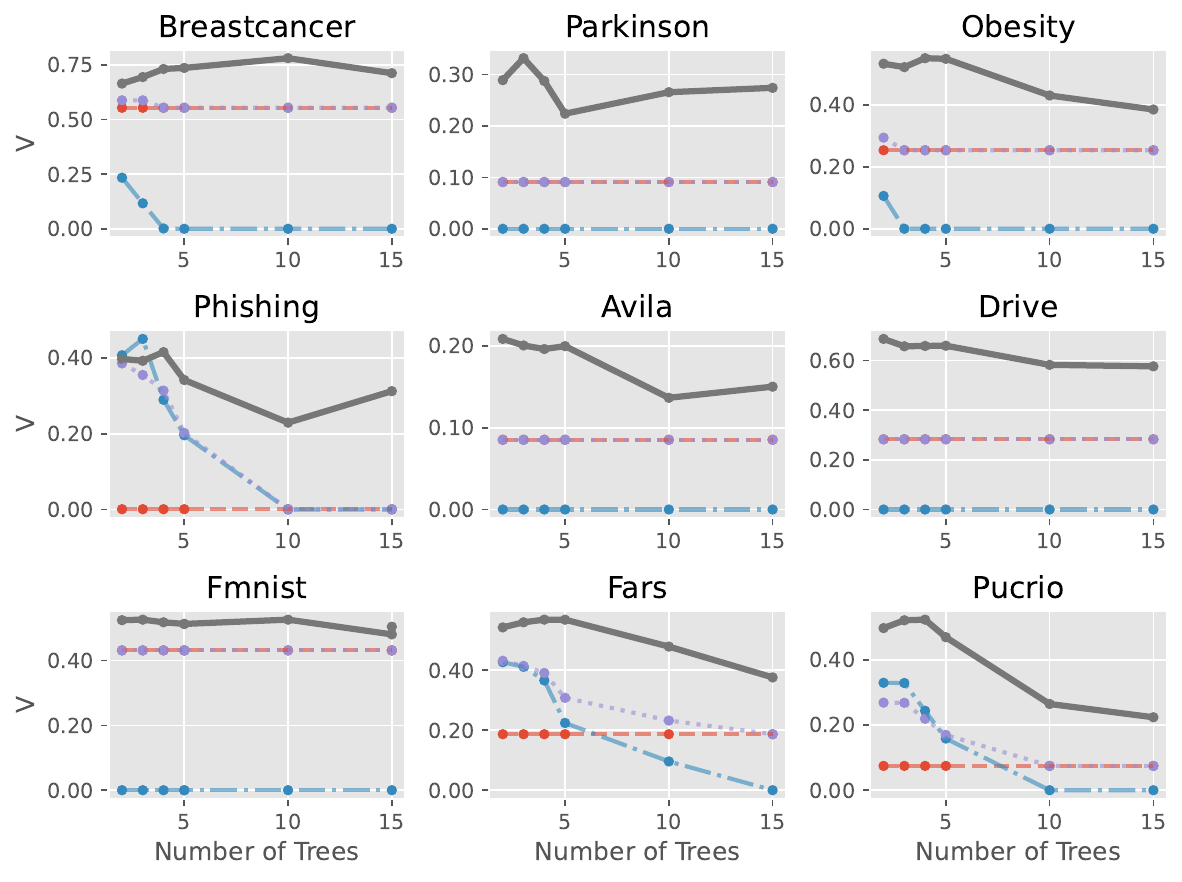}
    \subcaption{XGBoost}
  \end{minipage}
  \caption{Impact of the number of trees. Increasing the number of trees has a positive influence on the performance of the attack against Random Forest. For XGBoost, the larger number of trees does not always to lead better attack performance since the amount of information about the label gradually decreases as the training progresses. We use different scales for y-axes for better viewing.}
  \label{fig:round}
\end{figure}

\subsection{Defense Results}

\paragraph{Main Results}

Fig.~\ref{fig:results:defense} shows the AUC of the trained VFL model on the test dataset (x-axis) and the attack performance (y-axis) with Reduced-Leakage, LP-2ST, Grafting-LDP, and ID-LMID. A defense is considered ideal when its result is located at the bottom right of the figure, indicating high performance on the main VFL task and a lower success rate for the attack. For LP-2ST, Grafting-LDP, and ID-LMID, smaller dots indicate a lower privacy budget ($\epsilon$ or $\xi$), leading to more robust defense at the cost of higher utility loss (therefore appearing on the left bottom of the figure). ID-LMID yields a better trade-off between privacy and utility compared to other methods in most settings. This result suggests that ID-LMID can help find well-fitted tree structures that do not excessively utilize the features of passive parties, which prevents the adversary from obtaining enough information to infer the training labels. Grafting-LDP also significantly improves the utility of LP-2ST while reducing the amount of label leakage. Note that the V-measures of Grafting-LDP and LP-2ST are the same since applying Grafting-LDP does not change the trained model accessible to the attacker.

\begin{figure}[!th]
    \begin{minipage}[b]{0.48\linewidth}
      \centering
      \includegraphics[width=\linewidth]{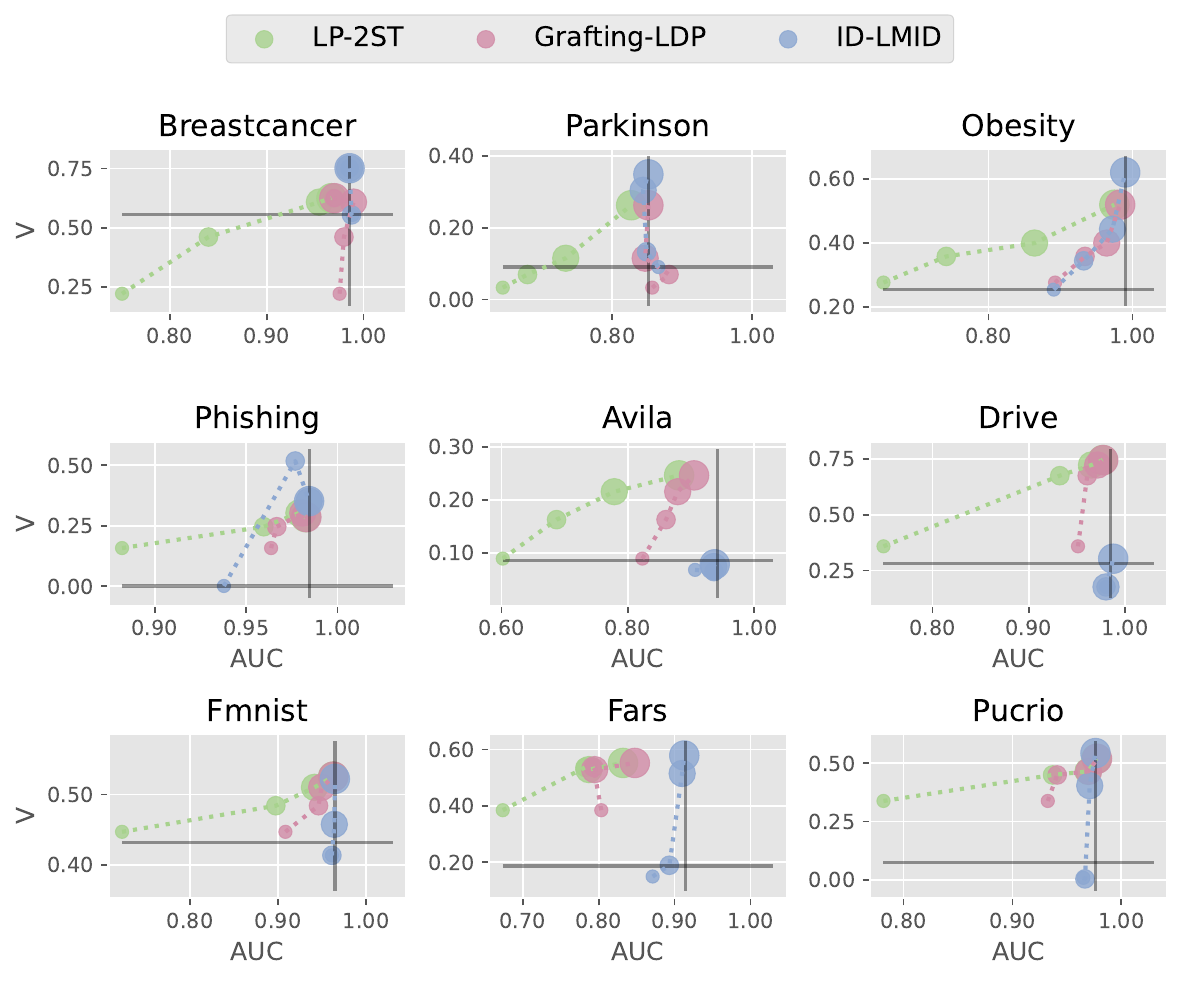}
      \subcaption{Random Forest}
    \end{minipage}
    \hspace{0.01\columnwidth}
    \begin{minipage}[b]{0.48\linewidth}
      \centering
  \includegraphics[width=\linewidth]{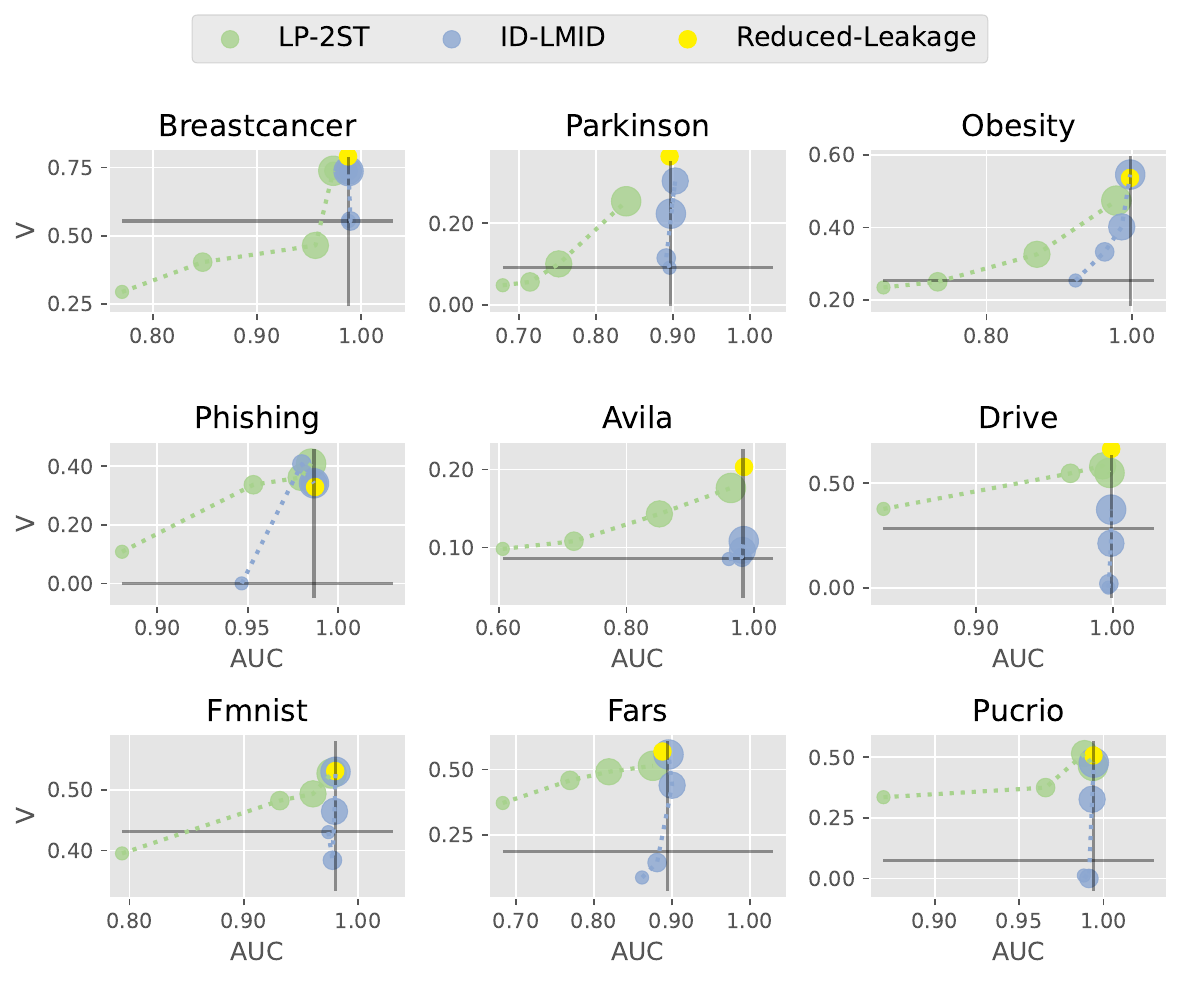}
  \subcaption{XGBoost}
    \end{minipage}

\caption{Defense results against ID2Graph attack: the AUC of the trained VFL model on the test dataset (x-axis) and the attack performance of ID2Graph (y-axis) with Reduced-Leakage, LP-2ST, and ID-LMID defenses. The marker size is proportional to the privacy budget: $\epsilon$ for LP-MST and $\xi$ for ID-LMID. The horizontal black line is the CL results, and the vertical black line shows the AUC of the main task without any defense. ID-LMID achieves the best privacy-utility trade-off. We use different scales for x/y-axes for better viewing.}
  \label{fig:results:defense}
\end{figure}


Fig.~\ref{fig:defense-others} also shows the AUC of the trained model on the test dataset (x-axis) and the attack performance (y-axis) with Reduced-Leakage, LP-2ST, Grafting-LDP and ID-LMID defenses against baseline attacks. Similar to the result of ID2Graph, our defense methods yield better privacy-utility tradeoffs compared to other existing defenses. 


\begin{figure}[!th]
  \begin{minipage}[b]{0.48\linewidth}
    \centering
    \includegraphics[width=\linewidth]{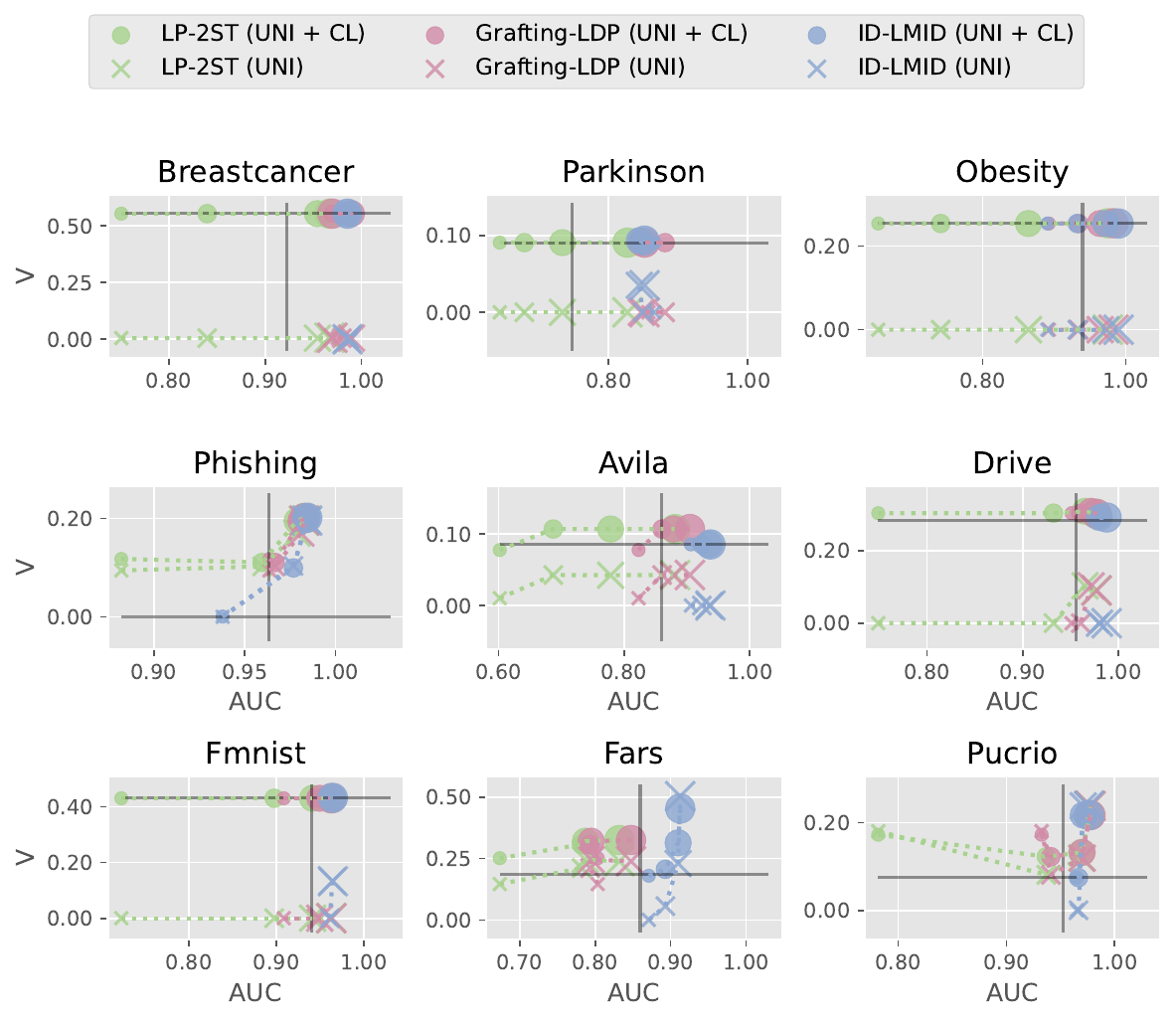}
    \subcaption{Random Forest}
  \end{minipage}
  \hspace{0.002\columnwidth}
  \begin{minipage}[b]{0.48\linewidth}
    \centering
    \includegraphics[width=\linewidth]{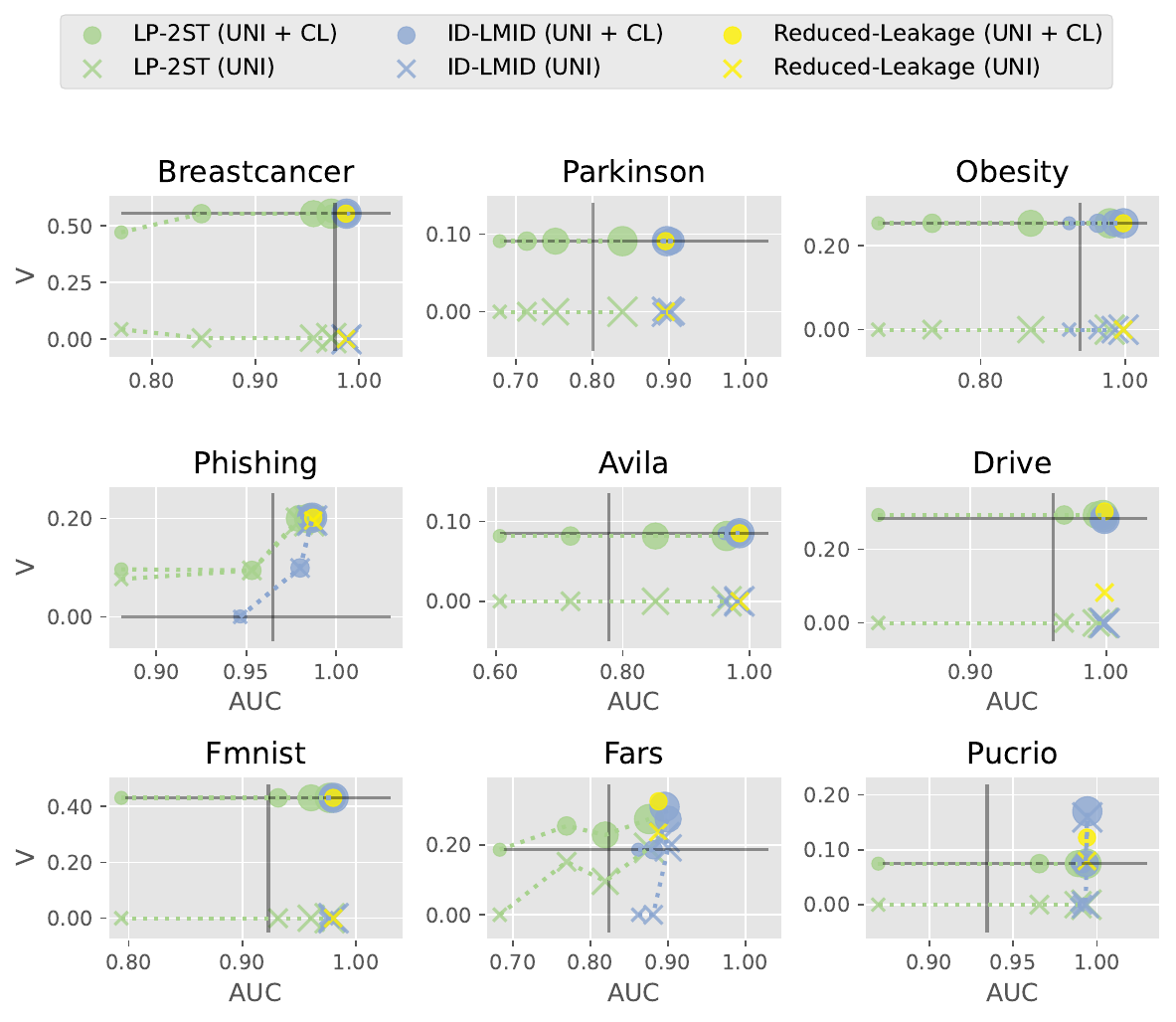}
    \subcaption{XGBoost}
  \end{minipage}
  \caption{Defense results against UNI and UNI + CL. The format is the same as Fig.~\ref{fig:results:defense}. ID-LMID achieves the best privacy-utility trade-off.}
  \label{fig:defense-others}
\end{figure}

\paragraph{Impact of Feature Partition}

The performance of ID-LMID and Grafting-LDP also depends on the availability of features at the active party since they require training with only the features owned by the active party. Fig.~\ref{fig:results:defense-zero} shows the results when the active party has no features, while all other settings are kept the same as Fig~4. This situation is the hardest one for the active party. Reduced-Leakage defense is not applicable in this scenario, and the number of stages of LP-MST is 1. While the AUC of ID-LMID with the same $\xi$ is generally lower compared to Fig.~4, ID-LMID and Grafting-LDP still achieve better utility-privacy tradeoffs than LP-MST.

\begin{figure}[!th]
    \begin{minipage}[b]{0.48\linewidth}
      \centering
      \includegraphics[width=\linewidth]{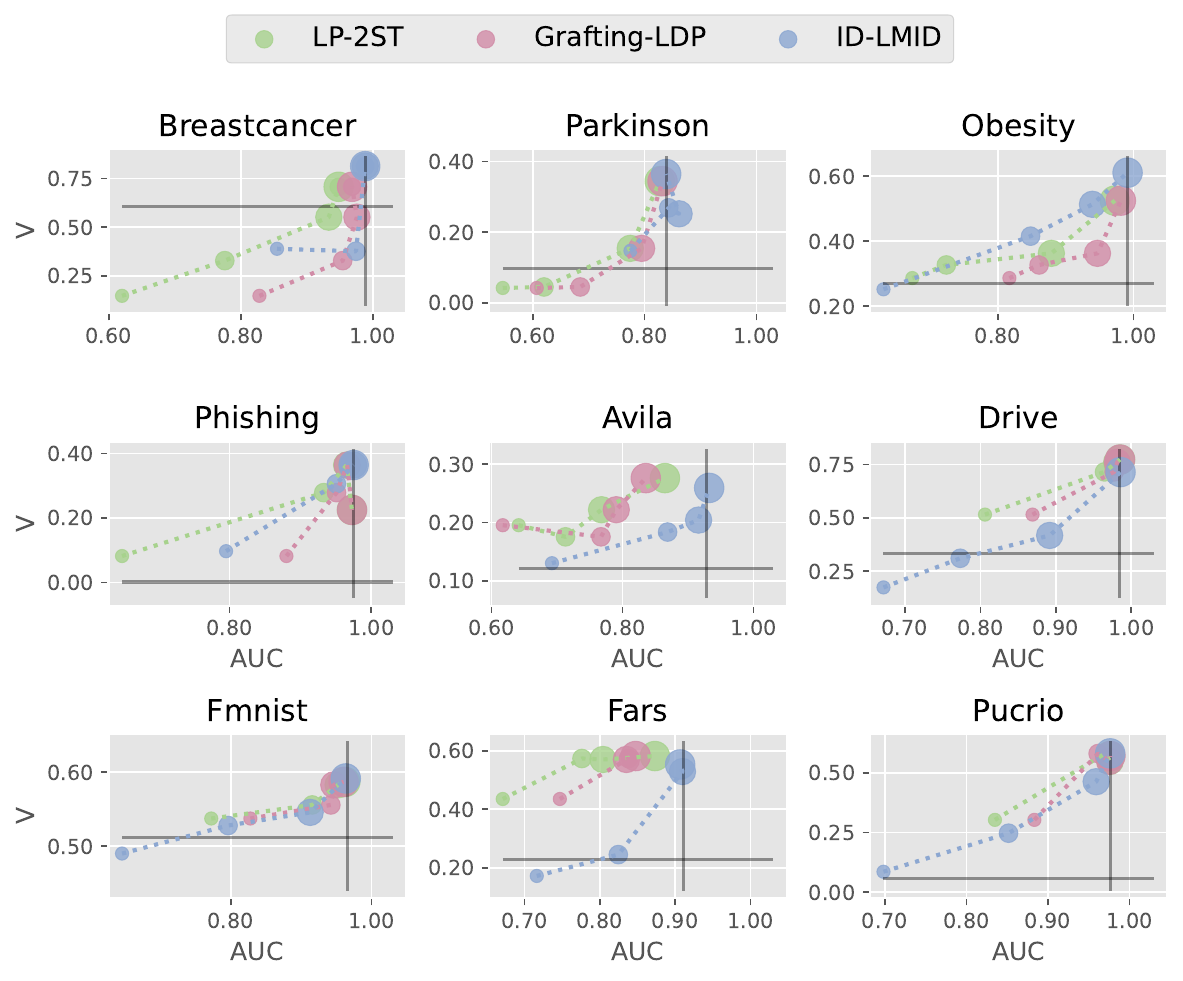}
      \subcaption{Random Forest}
    \end{minipage}
    \hspace{0.002\columnwidth}
    \begin{minipage}[b]{0.48\linewidth}
      \centering
  \includegraphics[width=\linewidth]{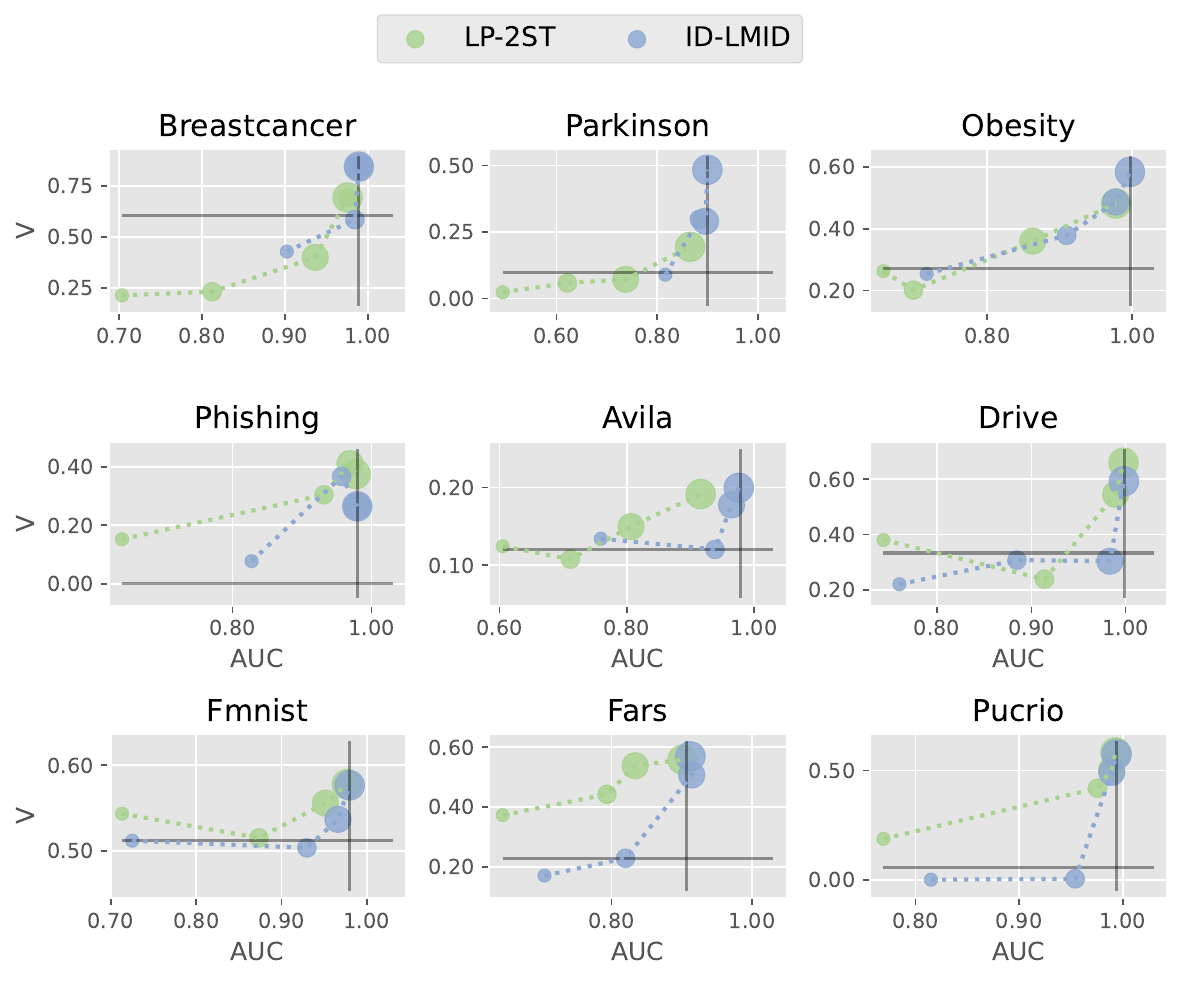}
  \subcaption{XGBoost}
    \end{minipage}

\caption{Defense results against ID2Graph when the active party does not have any features. Our defense methods still allow the active party to protect the training labels with better privacy-utility trade-offs}
  \label{fig:results:defense-zero}
\end{figure}

\paragraph{Communication Costs}

Table~\ref{tab:com-cost} illustrates the comparison of the rate of increase in the number of communicated ciphertexts due to applying each defense. The values are averaged across all datasets. Based on~\cite{cheng2021secureboost,yao2022efficient}, we assume that Random Forest communicates the encrypted labels and their summations within each instance space, while XGBoost communicates the encrypted gradient/hessian and their summations within each instance space. All defenses do not result in excessive additional communication. Reduced-Leakage lowers communication costs by excluding passive parties from the initial tree training. ID-LMID, with a lower $\xi$, also reduces communication by excluding passive parties from training more nodes, thus reducing the number of communicated ciphertexts. In contrast to XGBoost, ID-LMID in Random Forest, as discussed in Section~\ref{subseq:defense-vs}, doesn't require extra communication for evaluating purities, resulting in lower overall communication. Furthermore, because trees trained on noisy labels tend to be larger due to the difficulty of fitting, they lead to higher communication costs. Lastly, Grafting-LDP has an equivalent communication cost to ID-LMID.

\begin{table}[!th]
\centering
\caption{Rate of increase in the number of communicated ciphertexts compared to not applying any defense.}
\label{tab:com-cost}
\begin{tabular}{@{}c|c|c||c@{}}
\toprule
Model & Defense                                                                        & Parameter & \begin{tabular}[c]{@{}c@{}}Rate of\\ Increase\end{tabular} \\ \midrule
\multirow{4}{*}{RadomForest} & \multirow{2}{*}{ID-LMID} & $\xi=0.5$  & 0.364 \\
                             &                          & $\xi=2.0$  & 0.820 \\ \cmidrule(l){2-4}
      & \multirow{2}{*}{\begin{tabular}[c]{@{}c@{}}LP-MST $/$ \\ Grafting-LDP\end{tabular}} & $\epsilon=0.5$   & 1.16                                                       \\
                             &                          & $\epsilon=2.0$ & 1.14  \\ \midrule
\multirow{5}{*}{XGBoost}     & \multirow{2}{*}{ID-LMID} & $\xi=0.5$  & 0.682 \\
                             &                          & $\xi=2.0$  & 1.17  \\ \cmidrule(l){2-4}
                             & \multirow{2}{*}{LP-MST}  & $\epsilon=0.5$ & 1.18  \\
                             &                          & $\epsilon=2.0$ & 1.17  \\ \cmidrule(l){2-4}
                             & Reduced-Leakage                       &         & 0.798 \\ \bottomrule
\end{tabular}
\end{table}



\section{Related Work}
\label{sec:rw}

\paragraph{Tree-based Vertical FL}

Federated learning (FL) is a technique that enables training models on decentralized data sources without sharing local raw data. Vertical federated learning (VFL) is one type of FL where each party owns a vertically partitioned dataset. Tree-based Vertical FL (T-VFL) has been actively studied due to its efficiency and practicality. Whereas classic works~\cite{10.1007/11562382_75,du2002building,vaidya2008privacy,gangrade2009building} mostly focus on securely training a single decision tree on a vertically federated dataset, recent works propose algorithms for privacy-preserving tree ensembles, including bagging~\cite{liu2020federated,hou2021verifiable,XU2023237,9514457,yao2022efficient} and boosting ~\cite{cheng2021secureboost,10.1145/3448016.3457241,chen2021secureboost+,tian2020federboost,zhu2021pivodl,wang2022feverless}.

\paragraph{Label Inference Attack in VFL}

Label inference attack in VFL is typically conducted at a passive party of a VFL system in order to infer training labels held by the active party~\cite{Liu2022VFLsurvey}. Most existing works only apply to VFL systems trained on logistic regression~\cite{tan2022residue} or neural networks~\cite{li2021label,fu2022label,sun2022label,kariyappa2021gradient,qiu2022your,wainakh2022user,9369498} models, where T-VFL systems are much less studied. For T-VFL, \citep{wu2020privacy} suggests that the attacker can infer each sample's label if each node's weight is obtainable. However, this is infeasible in many protocols, such as Secureboost where the weight of each node is not exposed to passive parties. \citep{cheng2021secureboost} finds that the instance spaces, which are not concealed from passive parties in many existing studies~\cite{cheng2021secureboost,10.1007/11562382_75,liu2020federated,hou2021verifiable,10.1145/3448016.3457241,chen2021secureboost+,XU2023237,9514457,yao2022efficient,tian2020federboost,zhu2021pivodl,wang2022feverless}, might cause label leakage, yet how accurately the attacker can infer the labels from these instance spaces are not well-studied.

\paragraph{Defense against Label Leakage in VFL}

Current defense strategies can be broadly categorized into non-cryptographic and cryptographic approaches. Cryptographic approaches protect intermediate information, including the instance space, using Multi-party Computation (MPC)~\cite{hoogh2014practical,adams2021privacy,cryptoeprint:2020/1130,deforth2021xorboost} or Homomorphic Encryption (HE) ~\cite{wu2020privacy,fang2021large}, but the communication and computation costs are too high for realistic situations ~\cite{wu2020privacy,fang2021large}. Non-cryptographic approaches aim to prevent information leakage by introducing constraints or noise into the training process. Reduced-Leakage for SecureBoost~\cite{cheng2021secureboost} trains the first tree without using passive parties, thus limiting their access to label information. However, this approach relies on the strong assumption that the training dataset is large enough and the model depth is shallow enough. Differential privacy (DP) rigorously quantifies information leakage from statistical algorithms~\cite{dwork2014algorithmic,alvim2011differential}. Label DP ~\cite{ghazi2021deep,beimel2013private,pmlr-v19-chaudhuri11a} is proposed to tackle the situation where only labels are sensitive information. One DP-based practical algorithm that prevents label leakage from tree-based models is LP-MST~\cite{ghazi2021deep}. However, differential privacy often sacrifices utility to prevent privacy attacks sufficiently~\cite{wang2021improving}. Mutual information-based defense limits the dependency between two variables to prevent the adversary from estimating one variable from the other~\cite{wang2021improving,farokhi2020modelling}, while applicability on label leakage in T-VFL has not been studied before.

\section{Conclusion}
\label{seq:conclusion}

This work explores the vulnerability of tree-based VFL (T-VFL) to label inference attacks and demonstrates that instance spaces exchanged in a typical T-VFL system can be exploited to infer sensitive training labels via ID2Graph attack. To counteract label leakage, we propose a mutual information-based defense, ID-LMID, and a defense based on label differential privacy, Grafting-LDP. Experiments on diverse datasets illustrate the significant risk of label leakage from the instance space, as well as the effectiveness of Grafting-LDP and ID-LMID compared with other existing defenses. Future work for the attack part might include better community detection and clustering methods, such as community detection with node attributes and other weighting strategies for the adjacency matrix. To enhance the defense algorithms, we will also conduct research on improving Grafting-LDP to be compatible with boosting-based methods and tighter upper bounds of mutual information between instance space ad labels. We hope our study stimulates a re-evaluation of data safety for T-VFL schemes and inspires future work on defense strategies for T-VFL.

\bibliographystyle{unsrt}
\bibliography{ref}

\begin{thebibliography}{10}

\bibitem{kaggle2021}
Kaggle.
\newblock 2021 kaggle machine learning \& data science survey, 2021.

\bibitem{cheng2021secureboost}
Kewei Cheng, Tao Fan, Yilun Jin, Yang Liu, Tianjian Chen, Dimitrios
  Papadopoulos, and Qiang Yang.
\newblock Secureboost: A lossless federated learning framework.
\newblock {\em IEEE Intelligent Systems}, 36(6):87--98, 2021.

\bibitem{wu2020privacy}
Yuncheng Wu, Shaofeng Cai, Xiaokui Xiao, Gang Chen, and Beng~Chin Ooi.
\newblock Privacy preserving vertical federated learning for tree-based models.
\newblock {\em arXiv preprint arXiv:2008.06170}, 2020.

\bibitem{yin2021comprehensivesurvey}
Xuefei Yin, Yanming Zhu, and Jiankun Hu.
\newblock A comprehensive survey of privacy-preserving federated learning: A
  taxonomy, review, and future directions.
\newblock {\em ACM Comput. Surv.}, 54(6), jul 2021.

\bibitem{li2022opboost}
Xiaochen Li, Yuke Hu, Weiran Liu, Hanwen Feng, Li~Peng, Yuan Hong, Kui Ren, and
  Zhan Qin.
\newblock Opboost: a vertical federated tree boosting framework based on
  order-preserving desensitization.
\newblock {\em arXiv preprint arXiv:2210.01318}, 2022.

\bibitem{Liu2022VFLsurvey}
Yang Liu, Yan Kang, Tianyuan Zou, Yanhong Pu, Yuanqin He, Xiaozhou Ye,
  Ye~Ouyang, Ya-Qin Zhang, and Qiang Yang.
\newblock Vertical federated learning, 2022.

\bibitem{liu2019communication}
Yang Liu, Yan Kang, Liping Li, Xinwei Zhang, Yong Cheng, Tianjian Chen, Mingyi
  Hong, and Qiang Yang.
\newblock A communication efficient vertical federated learning framework.
\newblock {\em Scanning Electron Microsc Meet at}, 2019.

\bibitem{yang2019federated}
Qiang Yang, Yang Liu, Yong Cheng, Yan Kang, Tianjian Chen, and Han Yu.
\newblock Federated learning.
\newblock {\em Synthesis Lectures on Artificial Intelligence and Machine
  Learning}, 13(3):1--207, 2019.

\bibitem{zheng2020vertical}
Fanglan Zheng, Kun Li, Jiang Tian, Xiaojia Xiang, et~al.
\newblock A vertical federated learning method for interpretable scorecard and
  its application in credit scoring.
\newblock {\em arXiv preprint arXiv:2009.06218}, 2020.

\bibitem{yang2023survey}
Liu Yang, Di~Chai, Junxue Zhang, Yilun Jin, Leye Wang, Hao Liu, Han Tian, Qian
  Xu, and Kai Chen.
\newblock A survey on vertical federated learning: From a layered perspective.
\newblock {\em arXiv preprint arXiv:2304.01829}, 2023.

\bibitem{fu2022label}
Chong Fu, Xuhong Zhang, Shouling Ji, Jinyin Chen, Jingzheng Wu, Shanqing Guo,
  Jun Zhou, Alex~X Liu, and Ting Wang.
\newblock Label inference attacks against vertical federated learning.
\newblock In {\em 31st USENIX Security Symposium (USENIX Security 22), Boston,
  MA}, 2022.

\bibitem{sun2022label}
Jiankai Sun, Xin Yang, Yuanshun Yao, and Chong Wang.
\newblock Label leakage and protection from forward embedding in vertical
  federated learning.
\newblock {\em arXiv preprint arXiv:2203.01451}, 2022.

\bibitem{li2021label}
Oscar Li, Jiankai Sun, Xin Yang, Weihao Gao, Hongyi Zhang, Junyuan Xie,
  Virginia Smith, and Chong Wang.
\newblock Label leakage and protection in two-party split learning.
\newblock {\em arXiv preprint arXiv:2102.08504}, 2021.

\bibitem{10.1007/11562382_75}
Eakalak Suthampan and Songrit Maneewongvatana.
\newblock Privacy preserving decision tree in multi party environment.
\newblock In Gary~Geunbae Lee, Akio Yamada, Helen Meng, and Sung~Hyon Myaeng,
  editors, {\em Information Retrieval Technology}, pages 727--732, Berlin,
  Heidelberg, 2005. Springer Berlin Heidelberg.

\bibitem{liu2020federated}
Yang Liu, Yingting Liu, Zhijie Liu, Yuxuan Liang, Chuishi Meng, Junbo Zhang,
  and Yu~Zheng.
\newblock Federated forest.
\newblock {\em IEEE Transactions on Big Data}, 2020.

\bibitem{hou2021verifiable}
Jinpeng Hou, Mang Su, Anmin Fu, and Yan Yu.
\newblock Verifiable privacy-preserving scheme based on vertical federated
  random forest.
\newblock {\em IEEE Internet of Things Journal}, 2021.

\bibitem{10.1145/3448016.3457241}
Fangcheng Fu, Yingxia Shao, Lele Yu, Jiawei Jiang, Huanran Xue, Yangyu Tao, and
  Bin Cui.
\newblock Vf2boost: Very fast vertical federated gradient boosting for
  cross-enterprise learning.
\newblock In {\em Proceedings of the 2021 International Conference on
  Management of Data}, SIGMOD '21, page 563–576, New York, NY, USA, 2021.
  Association for Computing Machinery.

\bibitem{chen2021secureboost+}
Weijing Chen, Guoqiang Ma, Tao Fan, Yan Kang, Qian Xu, and Qiang Yang.
\newblock Secureboost+: A high performance gradient boosting tree framework for
  large scale vertical federated learning.
\newblock {\em arXiv preprint arXiv:2110.10927}, 2021.

\bibitem{XU2023237}
Yang Xu, Xuexian Hu, Jianghong Wei, Hongjian Yang, and Kejia Li.
\newblock Vf-cart: A communication-efficient vertical federated framework for
  the cart algorithm.
\newblock {\em Journal of King Saud University - Computer and Information
  Sciences}, 35(1):237--249, 2023.

\bibitem{9514457}
Yang Liu, Zhuo Ma, Yilong Yang, Ximeng Liu, Jianfeng Ma, and Kui Ren.
\newblock Revfrf: Enabling cross-domain random forest training with revocable
  federated learning.
\newblock {\em IEEE Transactions on Dependable and Secure Computing},
  19(6):3671--3685, 2022.

\bibitem{yao2022efficient}
Houpu Yao, Jiazhou Wang, Peng Dai, Liefeng Bo, and Yanqing Chen.
\newblock An efficient and robust system for vertically federated random
  forest.
\newblock {\em arXiv preprint arXiv:2201.10761}, 2022.

\bibitem{tian2020federboost}
Zhihua Tian, Rui Zhang, Xiaoyang Hou, Jian Liu, and Kui Ren.
\newblock Federboost: Private federated learning for gbdt.
\newblock {\em arXiv preprint arXiv:2011.02796}, 2020.

\bibitem{zhu2021pivodl}
Hangyu Zhu, Rui Wang, Yaochu Jin, and Kaitai Liang.
\newblock Pivodl: Privacy-preserving vertical federated learning over
  distributed labels.
\newblock {\em IEEE Transactions on Artificial Intelligence}, 2021.

\bibitem{wang2022feverless}
Rui Wang, Oğuzhan Ersoy, Hangyu Zhu, Yaochu Jin, and Kaitai Liang.
\newblock Feverless: Fast and secure vertical federated learning based on
  xgboost for decentralized labels.
\newblock {\em IEEE Transactions on Big Data}, pages 1--15, 2022.

\bibitem{chen2016xgboost}
Tianqi Chen and Carlos Guestrin.
\newblock Xgboost: A scalable tree boosting system.
\newblock In {\em Proceedings of the 22nd acm sigkdd international conference
  on knowledge discovery and data mining}, pages 785--794, 2016.

\bibitem{blondel2008fast}
Vincent~D Blondel, Jean-Loup Guillaume, Renaud Lambiotte, and Etienne Lefebvre.
\newblock Fast unfolding of communities in large networks.
\newblock {\em Journal of statistical mechanics: theory and experiment},
  2008(10):P10008, 2008.

\bibitem{hartigan1979algorithm}
John~A Hartigan and Manchek~A Wong.
\newblock Algorithm as 136: A k-means clustering algorithm.
\newblock {\em Journal of the royal statistical society. series c (applied
  statistics)}, 28(1):100--108, 1979.

\bibitem{ahmed2020k}
Mohiuddin Ahmed, Raihan Seraj, and Syed Mohammed~Shamsul Islam.
\newblock The k-means algorithm: A comprehensive survey and performance
  evaluation.
\newblock {\em Electronics}, 9(8):1295, 2020.

\bibitem{dwork2006differential}
Cynthia Dwork.
\newblock Differential privacy.
\newblock In {\em International colloquium on automata, languages, and
  programming}, pages 1--12. Springer, 2006.

\bibitem{wang2021improving}
Tianhao Wang, Yuheng Zhang, and Ruoxi Jia.
\newblock Improving robustness to model inversion attacks via mutual
  information regularization.
\newblock In {\em Proceedings of the AAAI Conference on Artificial
  Intelligence}, volume~35, pages 11666--11673, 2021.

\bibitem{dwork2014algorithmic}
Cynthia Dwork, Aaron Roth, et~al.
\newblock The algorithmic foundations of differential privacy.
\newblock {\em Foundations and Trends{\textregistered} in Theoretical Computer
  Science}, 9(3--4):211--407, 2014.

\bibitem{alvim2011differential}
M{\'a}rio~S Alvim, Miguel~E Andr{\'e}s, Konstantinos Chatzikokolakis, Pierpaolo
  Degano, and Catuscia Palamidessi.
\newblock Differential privacy: on the trade-off between utility and
  information leakage.
\newblock In {\em International Workshop on Formal Aspects in Security and
  Trust}, pages 39--54. Springer, 2011.

\bibitem{ghazi2021deep}
Badih Ghazi, Noah Golowich, Ravi Kumar, Pasin Manurangsi, and Chiyuan Zhang.
\newblock Deep learning with label differential privacy.
\newblock {\em Advances in Neural Information Processing Systems},
  34:27131--27145, 2021.

\bibitem{beimel2013private}
Amos Beimel, Kobbi Nissim, and Uri Stemmer.
\newblock Private learning and sanitization: Pure vs. approximate differential
  privacy.
\newblock In {\em Approximation, Randomization, and Combinatorial Optimization.
  Algorithms and Techniques}, pages 363--378. Springer, 2013.

\bibitem{pmlr-v19-chaudhuri11a}
Kamalika Chaudhuri and Daniel Hsu.
\newblock Sample complexity bounds for differentially private learning.
\newblock In Sham~M. Kakade and Ulrike von Luxburg, editors, {\em Proceedings
  of the 24th Annual Conference on Learning Theory}, volume~19 of {\em
  Proceedings of Machine Learning Research}, pages 155--186, Budapest, Hungary,
  09--11 Jun 2011. PMLR.

\bibitem{DBLP:journals/corr/abs-2009-05241}
Tianhao Wang, Yuheng Zhang, and Ruoxi Jia.
\newblock Improving robustness to model inversion attacks via mutual
  information regularization.
\newblock {\em CoRR}, abs/2009.05241, 2020.

\bibitem{wang2016relation}
Weina Wang, Lei Ying, and Junshan Zhang.
\newblock On the relation between identifiability, differential privacy, and
  mutual-information privacy.
\newblock {\em IEEE Transactions on Information Theory}, 62(9):5018--5029,
  2016.

\bibitem{cuff2016differential}
Paul Cuff and Lanqing Yu.
\newblock Differential privacy as a mutual information constraint.
\newblock In {\em Proceedings of the 2016 ACM SIGSAC Conference on Computer and
  Communications Security}, pages 43--54, 2016.

\bibitem{cover1999elements}
Thomas~M Cover.
\newblock {\em Elements of information theory}.
\newblock John Wiley \& Sons, 1999.

\bibitem{9892321}
Qianying Liao, Bruno Cabral, João~Paulo Fernandes, and Nuno Lourenço.
\newblock Herb: Privacy-preserving random forest with partially homomorphic
  encryption.
\newblock In {\em 2022 International Joint Conference on Neural Networks
  (IJCNN)}, pages 1--10, 2022.

\bibitem{paillier1999public}
Pascal Paillier.
\newblock Public-key cryptosystems based on composite degree residuosity
  classes.
\newblock In {\em International conference on the theory and applications of
  cryptographic techniques}, pages 223--238. Springer, 1999.

\bibitem{liu2021enabling}
Yejia Liu, Weiyuan Wu, Lampros Flokas, Jiannan Wang, and Eugene Wu.
\newblock Enabling sql-based training data debugging for federated learning.
\newblock {\em arXiv preprint arXiv:2108.11884}, 2021.

\bibitem{Dua:2019}
Dheeru Dua and Casey Graff.
\newblock {UCI} machine learning repository, 2017.

\bibitem{palechor2019dataset}
Fabio~Mendoza Palechor and Alexis de~la Hoz~Manotas.
\newblock Dataset for estimation of obesity levels based on eating habits and
  physical condition in individuals from colombia, peru and mexico.
\newblock {\em Data in brief}, 25:104344, 2019.

\bibitem{10.1016/j.engappai.2018.03.023}
C.~De\&nbsp;Stefano, M.~Maniaci, F.~Fontanella, and
  A.~Scotto\&nbsp;di\&nbsp;Freca.
\newblock Reliable writer identification in medieval manuscripts through page
  layout features: The “avila” bible case.
\newblock {\em Eng. Appl. Artif. Intell.}, 72(C):99–110, jun 2018.

\bibitem{xiao2017/online}
Han Xiao, Kashif Rasul, and Roland Vollgraf.
\newblock Fashion-mnist: a novel image dataset for benchmarking machine
  learning algorithms, 2017.

\bibitem{Olson2017PMLB}
Randal~S. Olson, William La~Cava, Patryk Orzechowski, Ryan~J. Urbanowicz, and
  Jason~H. Moore.
\newblock Pmlb: a large benchmark suite for machine learning evaluation and
  comparison.
\newblock {\em BioData Mining}, 10(1):36, Dec 2017.

\bibitem{ugulino2012wearable}
Wallace Ugulino, D{\'e}bora Cardador, Katia Vega, Eduardo Velloso, Ruy
  Milidi{\'u}, and Hugo Fuks.
\newblock Wearable computing: Accelerometers’ data classification of body
  postures and movements.
\newblock In {\em Advances in Artificial Intelligence-SBIA 2012: 21th Brazilian
  Symposium on Artificial Intelligence, Curitiba, Brazil, October 20-25, 2012.
  Proceedings}, pages 52--61. Springer, 2012.

\bibitem{rosenberg2007v}
Andrew Rosenberg and Julia Hirschberg.
\newblock V-measure: A conditional entropy-based external cluster evaluation
  measure.
\newblock In {\em Proceedings of the 2007 joint conference on empirical methods
  in natural language processing and computational natural language learning
  (EMNLP-CoNLL)}, pages 410--420, 2007.

\bibitem{scikit-learn}
F.~Pedregosa, G.~Varoquaux, A.~Gramfort, V.~Michel, B.~Thirion, O.~Grisel,
  M.~Blondel, P.~Prettenhofer, R.~Weiss, V.~Dubourg, J.~Vanderplas, A.~Passos,
  D.~Cournapeau, M.~Brucher, M.~Perrot, and E.~Duchesnay.
\newblock Scikit-learn: Machine learning in {P}ython.
\newblock {\em Journal of Machine Learning Research}, 12:2825--2830, 2011.

\bibitem{noauthororeditorneo4j}
Neo4j.
\newblock Neo4j - the world’s leading graph database, 2012.

\bibitem{python_louvain}
Thomas Aynaud.
\newblock python-louvain x.y: Louvain algorithm for community detection.
\newblock
  \href{https://github.com/taynaud/python-louvain}{\texttt{https://github.com/taynaud/python-louvain}},
  2020.

\bibitem{kraskov2004estimating}
Alexander Kraskov, Harald St{\"o}gbauer, and Peter Grassberger.
\newblock Estimating mutual information.
\newblock {\em Physical review E}, 69(6):066138, 2004.

\bibitem{du2002building}
Wenliang Du and Zhijun Zhan.
\newblock Building decision tree classifier on private data.
\newblock In {\em Proceedings of the IEEE International Conference on Privacy,
  Security and Data Mining - Volume 14}, CRPIT '14, page 1–8, AUS, 2002.
  Australian Computer Society, Inc.

\bibitem{vaidya2008privacy}
Jaideep Vaidya, Chris Clifton, Murat Kantarcioglu, and A~Scott Patterson.
\newblock Privacy-preserving decision trees over vertically partitioned data.
\newblock {\em ACM Transactions on Knowledge Discovery from Data (TKDD)},
  2(3):1--27, 2008.

\bibitem{gangrade2009building}
Alka Gangrade and Ravindra Patel.
\newblock Building privacy-preserving c4. 5 decision tree classifier on
  multi-parties.
\newblock {\em International Journal on Computer Science and Engineering},
  1(3):199--205, 2009.

\bibitem{tan2022residue}
Juntao Tan, Lan Zhang, Yang Liu, Anran Li, and Ye~Wu.
\newblock Residue-based label protection mechanisms in vertical logistic
  regression.
\newblock {\em arXiv preprint arXiv:2205.04166}, 2022.

\bibitem{kariyappa2021gradient}
Sanjay Kariyappa and Moinuddin~K Qureshi.
\newblock Gradient inversion attack: Leaking private labels in two-party split
  learning.
\newblock {\em arXiv preprint arXiv:2112.01299}, 2021.

\bibitem{qiu2022your}
Pengyu Qiu, Xuhong Zhang, Shouling Ji, Tianyu Du, Yuwen Pu, Jun Zhou, and Ting
  Wang.
\newblock Your labels are selling you out: Relation leaks in vertical federated
  learning.
\newblock {\em IEEE Transactions on Dependable and Secure Computing}, 2022.

\bibitem{wainakh2022user}
Aidmar Wainakh, Fabrizio Ventola, Till M{\"u}{\ss}ig, Jens Keim, Carlos~Garcia
  Cordero, Ephraim Zimmer, Tim Grube, Kristian Kersting, and Max
  M{\"u}hlh{\"a}user.
\newblock User-level label leakage from gradients in federated learning.
\newblock {\em Proceedings on Privacy Enhancing Technologies},
  2022(2):227--244, 2022.

\bibitem{9369498}
Aidmar Wainakh, Till Müßig, Tim Grube, and Max Mühlhäuser.
\newblock Label leakage from gradients in distributed machine learning.
\newblock In {\em 2021 IEEE 18th Annual Consumer Communications \& Networking
  Conference (CCNC)}, pages 1--4, 2021.

\bibitem{hoogh2014practical}
Sebastiaan~de Hoogh, Berry Schoenmakers, Ping Chen, et~al.
\newblock Practical secure decision tree learning in a teletreatment
  application.
\newblock In {\em International Conference on Financial Cryptography and Data
  Security}, pages 179--194. Springer, 2014.

\bibitem{adams2021privacy}
Samuel Adams, Chaitali Choudhary, Martine De~Cock, Rafael Dowsley, David
  Melanson, Anderson~CA Nascimento, Davis Railsback, and Jianwei Shen.
\newblock Privacy-preserving training of tree ensembles over continuous data.
\newblock {\em arXiv preprint arXiv:2106.02769}, 2021.

\bibitem{cryptoeprint:2020/1130}
Mark Abspoel, Daniel Escudero, and Nikolaj Volgushev.
\newblock Secure training of decision trees with continuous attributes.
\newblock Cryptology ePrint Archive, Paper 2020/1130, 2020.
\newblock \url{https://eprint.iacr.org/2020/1130}.

\bibitem{deforth2021xorboost}
Kevin Deforth, Marc Desgroseilliers, Nicolas Gama, Mariya Georgieva, Dimitar
  Jetchev, and Marius Vuille.
\newblock Xorboost: Tree boosting in the multiparty computation setting.
\newblock {\em Cryptology ePrint Archive}, 2021.

\bibitem{fang2021large}
Wenjing Fang, Derun Zhao, Jin Tan, Chaochao Chen, Chaofan Yu, Li~Wang, Lei
  Wang, Jun Zhou, and Benyu Zhang.
\newblock Large-scale secure xgb for vertical federated learning.
\newblock In {\em Proceedings of the 30th ACM International Conference on
  Information \& Knowledge Management}, pages 443--452, 2021.

\bibitem{farokhi2020modelling}
Farhad Farokhi and Mohamed~Ali Kaafar.
\newblock Modelling and quantifying membership information leakage in machine
  learning.
\newblock {\em arXiv preprint arXiv:2001.10648}, 2020.

\end{thebibliography}

\end{document}